\documentclass{article}

\PassOptionsToPackage{numbers,compress}{natbib}

\usepackage[preprint]{neurips_2026}

\usepackage[utf8]{inputenc} 
\usepackage[T1]{fontenc}    
\usepackage{hyperref}       
\usepackage{url}            
\usepackage{booktabs}       
\usepackage{amsfonts}       
\usepackage{nicefrac}       
\usepackage{microtype}      
\usepackage{xcolor}         

\usepackage{amsmath}
\usepackage{amssymb}
\usepackage{mathtools}
\usepackage{amsthm}

\usepackage{graphicx}
\usepackage{wrapfig}
\usepackage{array}
\usepackage{multirow}
\usepackage{colortbl}
\usepackage{scalerel}

\usepackage{algorithm}
\usepackage[endLComment=,italicComments=false]{algpseudocodex}

\definecolor{acsacblue}{HTML}{598BE7}
\definecolor{ourpurple}{HTML}{CB297B}
\definecolor{ourblue}{HTML}{0076BA}
\definecolor{ourmiddle}{HTML}{66509B}
\definecolor{ourgreen}{HTML}{00A89D}
\definecolor{ourdark}{HTML}{101010}
\definecolor{ourlightblue}{HTML}{F5FAFE}
\definecolor{ourlightmiddle}{HTML}{F7F6F9}
\definecolor{ourlightpurple}{HTML}{FDF7FA}
\definecolor{ourlightgreen}{HTML}{F2FBFA}

\newcommand{\blue}[1]{\textcolor{acsacblue}{#1}}

\theoremstyle{plain}
\newtheorem{theorem}{Theorem}[section]
\newtheorem{lemma}[theorem]{Lemma}
\newtheorem{proposition}[theorem]{Proposition}

\theoremstyle{definition}
\newtheorem{definition}[theorem]{Definition}
\newtheorem{assumption}[theorem]{Assumption}

\theoremstyle{remark}
\newtheorem{remark}[theorem]{Remark}

\newcommand{\Aleq}{\mathcal{A}^{\le H}}        
\newcommand{\Bnh}{\mathcal{B}_{\theta}^{N,H}}  
\newcommand{\Bstar}{\mathcal{B}_{\le H}^{\star}} 
\newcommand{\Qnh}{Q_{\theta}^{N,H}}

\newcommand{\cibval}[3]{\ensuremath{\overset{\scriptscriptstyle[#2,#3]}{\mathbf{#1}}}}
\newcommand{\otocib}[6]{\ensuremath{\cibval{#1}{#2}{#3}\!\rightarrow\!\cibval{#4}{#5}{#6}}}
\newcommand{\soto}[6]{{\color{gray!60}\otocib{#1}{#2}{#3}{#4}{#5}{#6}}}     
\newcommand{\moto}[6]{{\color{gray!60}\otocib{#1}{#2}{#3}{#4}{#5}{#6}}}     
\newcommand{\foto}[6]{{\color{gray!60}\otocib{#1}{#2}{#3}{#4}{#5}{#6}}}     
     
\newcommand{\sotob}[6]{{\color{ourblue}\otocib{#1}{#2}{#3}{#4}{#5}{#6}}}    
\newcommand{\motob}[6]{{\color{ourmiddle}\otocib{#1}{#2}{#3}{#4}{#5}{#6}}}  
\newcommand{\fotob}[6]{{\color{ourpurple}\otocib{#1}{#2}{#3}{#4}{#5}{#6}}}  
\newcommand{\aoto}[6]{{\color{ourgreen}\otocib{#1}{#2}{#3}{#4}{#5}{#6}}}

\definecolor{baselinecolor}{HTML}{EEEEEE}

\title{ACSAC: Adaptive Chunk Size Actor-Critic with Causal Transformer Q-Network}

\author{
Qian Chen$^{1}$ \quad
Junqiao Zhao$^{1}$ \thanks{Corresponding author} \quad
Hongtu Zhou$^{1}$ \quad
Hang Yu$^{1}$ \\
\textbf{Yanping Zhao}$^{1}$ \quad
\textbf{Chen Ye}$^{1}$ \quad
\textbf{Guang Chen}$^{1}$ \\
$^{1}$Tongji University\\
}

\begin{document}

\maketitle

\begin{abstract}
Long-horizon, sparse-reward tasks pose a fundamental challenge for reinforcement learning,
since single-step TD learning suffers from bootstrapping error accumulation across successive Bellman updates.
Actor-critic methods with action chunking address this by operating over temporally extended actions,
which reduce the effective horizon, enable fast value backups, and support temporally consistent exploration.
However, existing methods rely on a fixed chunk size and therefore cannot adaptively balance reactivity against temporal consistency.
A large fixed chunk size reduces responsiveness to new observations,
while a small one produces incoherent motions, forcing task-specific tuning of the chunk size.
To address this limitation, we propose \textbf{A}daptive \textbf{C}hunk \textbf{S}ize \textbf{A}ctor-\textbf{C}ritic (\textbf{ACSAC}).
ACSAC leverages a causal Transformer critic to evaluate expected returns for action chunks of different sizes.
At each chunk boundary, it adaptively selects the chunk size that maximizes the expected return,
supporting flexible, state-dependent chunk sizes without task-specific tuning.
We prove that the ACSAC Bellman operator is a contraction whose unique fixed point is the action-value function of the adaptive policy.
Experiments on OGBench demonstrate that ACSAC achieves state-of-the-art performance on long-horizon, sparse-reward manipulation tasks across both offline RL and offline-to-online RL settings.
\end{abstract}

\section{Introduction}

Offline reinforcement learning (RL) trains policies from previously collected datasets without environment interaction~\citep{LevineSurvey},
and the offline-to-online setting further refines the offline-pretrained policy through additional online interaction~\citep{AWAC}.
However, on long-horizon, sparse-reward tasks, single-step temporal difference (TD) learning suffers from bootstrapping error accumulation,
since each update regresses toward the Q-network's own next-state estimate and small errors compound across many recursive Bellman updates~\citep{SHARSA}.
A common remedy is to use multi-step returns, which accelerate value propagation by shifting the regression target further into the future,
but they introduce an off-policy bias because the intermediate rewards are collected under the behavior policy rather than the current policy~\citep{TOP-ERL, QC}.
Action chunking has emerged as a recent and effective response to these limitations.

Action chunking trains the critic and the policy on action sequences rather than single actions,
which enables multi-step value backups without incurring off-policy bias, 
since the critic conditions on the full sequence~\citep{QC}.
Beyond these critic-side benefits, predicting full action sequences also produces temporally coherent behaviors that capture non-Markovian patterns in the data and improve exploration in sparse-reward settings~\citep{ACT, QC}.
However, existing action-chunking actor-critic methods rely on a fixed chunk size,
manually tuned per task and shared across all states~\citep{QC, DQC, MAC, DEAS}.
Yet the optimal balance between reactivity and temporal consistency is itself state-dependent~\citep{DQC, AAC}.
Stable phases admit long open-loop chunks, whereas sensitive states demand frequent replanning.
Over-committing to a long chunk in such states can drive the agent off the goal-reaching path
(Figure~\ref{fig:teaser})~\citep{DQC, AAC}.

\begin{figure}[t]
\centering
\vspace{-0.5em}
\includegraphics[width=\linewidth]{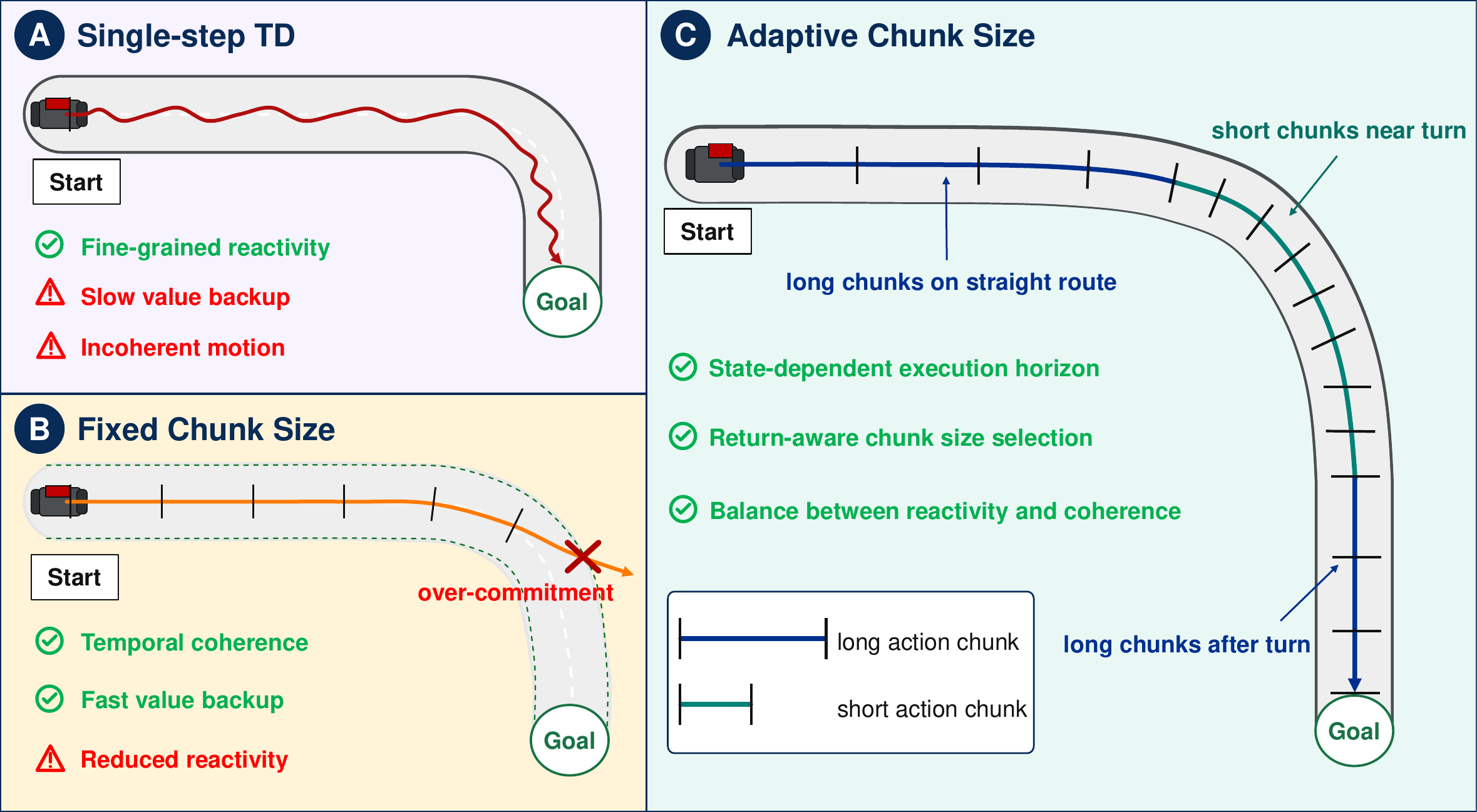}
\vspace{-0.5em}
\caption{\footnotesize \textbf{Motivation for adaptive action chunk size.}
\textbf{(A)} Single-step execution preserves fine-grained reactivity by replanning at every step,
but suffers from slow value backups and produces incoherent motions.
\textbf{(B)} A fixed chunk size improves motion coherence and accelerates value propagation, 
but its open-loop execution reduces reactivity within the chunk.
In sensitive states such as turns, this over-commitment to a long chunk can drive the agent off the goal-reaching path, lowering the return.
\textbf{(C)} ACSAC adaptively selects the execution horizon per state via return-aware chunk size selection,
executing longer chunks on the straight route and shorter chunks near the turn,
achieving a balance between reactivity and coherence.}
\label{fig:teaser}
\vspace{-1.0em}
\end{figure}

To address this limitation, we propose \textbf{A}daptive \textbf{C}hunk \textbf{S}ize \textbf{A}ctor-\textbf{C}ritic (\textbf{ACSAC}),
which adaptively selects the chunk size to maximize the critic's value estimate.
At each replanning state, ACSAC samples multiple candidate action chunks of length $H$ from an expressive flow BC policy.
A \textbf{cross-horizon calibrated} causal Transformer critic evaluates the Q-value of every \emph{prefix} (i.e., the first $h$ actions of a chunk, $h = 1, \ldots, H$) of each candidate.
ACSAC then applies rejection sampling jointly over the candidate index and the prefix length,
executes the highest-value prefix, and replans at the next chunk boundary.
ACSAC thus retains the multi-step value backups and coherent exploration of action chunking,
while the chunk size becomes state-dependent rather than fixed, 
adaptively balancing reactivity and temporal consistency at each state.

Our contributions can be summarized as follows:
1) We propose \textbf{ACSAC}, an action-chunking actor-critic that adaptively selects the chunk size per state, 
using a causal Transformer critic and joint rejection sampling over candidate index and prefix length.
2) We prove that ACSAC's prefix-conditioned Q-values are cross-horizon comparable, 
and that its Bellman operator is a contraction whose unique fixed point is the action-value function of the adaptive policy.
3) On long-horizon, sparse-reward manipulation tasks from OGBench~\citep{OGBench}, ACSAC outperforms single-step, multi-step, and fixed chunk size baselines in both offline and offline-to-online settings.

\section{Related Work}\label{sec:related}

\textbf{Offline RL and offline-to-online RL.}
Offline RL aims to learn a policy from a fixed dataset without environment interaction~\citep{LevineSurvey}.
The main challenge is the distributional shift between the behavior policy and the learned policy,
which can cause value overestimation and suboptimal performance~\citep{TD3+BC, ReBRAC}.
Recently, expressive generative policies based on diffusion and flow matching~\citep{DDPM, FM}
have been widely adopted for their expressivity over Gaussian policies.
Common policy extraction strategies for these generative policies include reparameterized gradients~\citep{DQL, CAC, FQL},
weighted regression~\citep{QGPO, EDP, QVPO, QIPO}, and rejection sampling~\citep{SfBC, IDQL, AlignIQL}.
In the offline-to-online setting, the offline-pretrained policy is further fine-tuned with online interactions~\citep{AWAC},
with techniques such as balanced sampling~\citep{Off2On}, 
high update-to-data ratios~\citep{RLPD}, 
value calibration~\citep{CalQL}, 
and more~\citep{Hybrid, ACA, EDIS, WSRL}.
Our method uses the same algorithm for both offline and online training,
simply adding online transitions to the offline dataset and applying none of the above specialized techniques.

\textbf{Action chunking.}
Action chunking originated in imitation learning, where a policy predicts and executes a sequence of actions in an open-loop manner,
improving robustness and capturing non-Markovian behavior~\citep{ACT, DP}.
Recent RL methods have brought action chunking into actor-critic frameworks,
where the critic evaluates whole action chunks and enables multi-step backup without off-policy bias.
In the online setting with expert demonstrations, CQN-AS~\citep{CQN-AS} learns a multi-level factorized critic on action chunks,
and AC3~\citep{AC3} builds on a DDPG-style framework to predict continuous action chunks.
In the offline or offline-to-online setting, Q-chunking~\citep{QC} runs RL at an action chunk level
with a flow BC policy and rejection sampling.
DQC~\citep{DQC} decouples the policy chunk size from the critic chunk size,
with the policy predicting a shorter action chunk while retaining the value learning benefits of the chunked critic.
MAC~\citep{MAC} combines an action-chunk dynamics model with rejection sampling from an expressive flow BC policy.
DEAS~\citep{DEAS} leverages action sequences for training critics with detached value learning and classification loss.
CGQ~\citep{CGQ} regularizes a single-step critic toward a chunked critic.
All of the above RL methods rely on a fixed chunk size across states and tasks.
However, the optimal chunk size may vary by state, and any fixed choice forces a single trade-off between reactivity and temporal consistency.
Our method brings adaptive, state-dependent chunk size selection to RL, driven by the critic's value estimate.

\textbf{Transformer-based Q-networks.}
Transformers have become strong backbones for the Q-network in RL.
Q-Transformer~\citep{QT} converts Q-function estimation into a discrete token sequence modeling problem, 
with per-dimension discretization treating each action dimension as a separate time step 
for autoregressive Q-value prediction.
TQL~\citep{TQL} scales Transformer Q-learning through per-layer control of attention entropy that prevents attention collapse.
While Q-Transformer and TQL apply Transformers to single-action Q-networks, 
recent work instead trains Transformer critics on action chunks for direct multi-step value evaluation.
TOP-ERL~\citep{TOP-ERL} introduces a causal Transformer critic in episodic RL,
with the policy outputting full trajectories via movement primitives.
T-SAC~\citep{T-SAC} adapts a similar causal Transformer critic to step-based RL, 
conditioning the critic on short trajectory segments while keeping the actor single-step.
SEAR~\citep{SEAR} combines a causal Transformer critic with multi-horizon targets and random replanning during data collection.
CO-RFT~\citep{CO-RFT} applies a causal Transformer critic to fine-tune vision-language-action models with offline RL.
None of these methods adapt the chunk size to state.
Our method also adopts a causal Transformer critic, 
but uniquely exploits its prefix-conditioned outputs at all prefix lengths 
to support state-dependent chunk size selection during both training and inference.

\section{Preliminaries}
\label{sec:preliminaries}

\textbf{Problem formulation.}
We consider an infinite-horizon Markov decision process (MDP) $\mathcal{M} = (\mathcal{S}, \mathcal{A}, T, r, \rho, \gamma)$, where $\mathcal{S}$ is the state space, $\mathcal{A} \subseteq \mathbb{R}^d$ is the continuous action space of dimension $d$, $T(s' | s, a): \mathcal{S} \times \mathcal{A} \to \Delta(\mathcal{S})$ is the transition dynamics distribution, $r(s, a): \mathcal{S} \times \mathcal{A} \to \mathbb{R}$ is the reward function, $\rho \in \Delta(\mathcal{S})$ is the initial state distribution, and $\gamma \in [0, 1)$ is the discount factor.
Here $\Delta(\mathcal{X})$ denotes the set of probability distributions over a space $\mathcal{X}$.
We assume access to a prior offline dataset $\mathcal{D}$ consisting of transition rollouts $\{(s, a, s', r)\}$ collected from $\mathcal{M}$.
The goal is to find a policy $\pi(a|s): \mathcal{S} \to \Delta(\mathcal{A})$ that maximizes the expected discounted return $J(\pi) := \mathbb{E}_{s_{t+1} \sim T(s_t, a_t), a_t \sim \pi(\cdot | s_t)} \left[ \sum_{t=0}^{\infty} \gamma^t r(s_t, a_t) \right]$.
In the \emph{offline} setting, the policy is learned entirely from the fixed dataset $\mathcal{D}$ without environment interactions; in the \emph{offline-to-online} setting, the offline-pretrained policy is further fine-tuned with online interactions.

\textbf{Chunk-based reinforcement learning.}
Standard TD-based methods learn a state-action value function $Q_\phi(s, a)$ by minimizing the single-step Bellman error:
\begin{equation}\label{eq:1step_td}
    \mathcal{L}^{\text{TD}}(\phi) = \mathbb{E}_{(s_t, a_t, r_t, s_{t+1}) \sim \mathcal{D}, \, a_{t+1} \sim \pi(\cdot | s_{t+1})} \left[ \left( Q_\phi(s_t, a_t) - r_t - \gamma Q_{\bar{\phi}}(s_{t+1}, a_{t+1}) \right)^2 \right],
\end{equation}
where $Q_{\bar{\phi}}$ is a target network with delayed parameters $\bar{\phi}$.
Each single-step backup propagates value only one step backward, slowing learning in long-horizon tasks.
A common strategy is the multi-step return, which replaces the single-step target with $\sum_{\tau=0}^{n-1} \gamma^\tau r_{t+\tau} + \gamma^n Q_{\bar{\phi}}(s_{t+n}, a_{t+n})$ for some horizon $n \ge 1$ and allows for an $n$-fold speed-up in value propagation.
However, the multi-step return introduces off-policy bias, since the discounted reward sum from the replay buffer may not reflect the expected rewards under the current policy when the intermediate actions $a_{t+1}, \ldots, a_{t+n-1}$ are chosen by a different policy~\citep{TOP-ERL, QC}.

Action chunking attains the value-propagation speedup of multi-step returns without their off-policy bias by extending RL to action sequences.
An action chunk of length $H$ starting at time $t$ is $a_{t:t+H} := (a_t, a_{t+1}, \ldots, a_{t+H-1}) \in \mathcal{A}^H$, and the corresponding $H$-step discounted reward is $r_t^H := \sum_{\tau=0}^{H-1} \gamma^\tau r_{t+\tau}$.
The chunked critic $Q_\phi(s_t, a_{t:t+H})$ is trained with:
\begin{equation}\label{eq:chunked_td}
    \mathcal{L}^{\text{chunk}}(\phi) = \mathbb{E}_{(s_t, a_{t:t+H}, r_t^H, s_{t+H}) \sim \mathcal{D}} \left[ \left( Q_\phi(s_t, a_{t:t+H}) - r_t^H - \gamma^H Q_{\bar{\phi}}(s_{t+H}, a_{t+H:t+2H}) \right)^2 \right],
\end{equation}
where $a_{t+H:t+2H} \sim \pi(\cdot | s_{t+H})$.
Crucially, unlike the multi-step return where the single-action critic $Q(s_t, a_t)$ is backed up with rewards generated by potentially off-policy actions, the chunked critic $Q(s_t, a_{t:t+H})$ conditions on the exact action sequence used to obtain the multi-step rewards $r_t^H$, eliminating the off-policy bias~\citep{QC}.

\textbf{Flow policy.}
Flow matching~\citep{FM, ReFlow, InterFlow} is a generative modeling technique that trains a velocity field to transform a noise distribution into a target data distribution.
Given a target distribution $p(x) \in \Delta(\mathbb{R}^d)$, flow matching fits a time-dependent velocity field $v_\theta(u, x): [0,1] \times \mathbb{R}^d \to \mathbb{R}^d$ whose corresponding flow $\psi_\theta(u, x)$, defined by the ODE $\frac{d}{du} \psi_\theta(u, x) = v_\theta(u, \psi_\theta(u, x))$,
transforms $\mathcal{N}(0, I_d)$ at $u=0$ into $p(x)$ at $u=1$, by minimizing:
\begin{equation}\label{eq:flow_matching}
    \mathcal{L}(\theta) = \mathbb{E}_{\substack{x_0 \sim \mathcal{N}(0, I_d),\, x_1 \sim p(x),\, u \sim \mathrm{Unif}([0, 1]),\, x_u = (1-u)x_0 + ux_1}} \left[ \| v_\theta(u, x_u) - (x_1 - x_0) \|_2^2 \right],
\end{equation}
where $x_u := (1-u)x_0 + u x_1$ is the linear interpolation between $x_0$ and $x_1$.
To use flow matching for policy learning, we train a state-conditioned velocity field $v_\theta(u, s, a_z): [0,1] \times \mathcal{S} \times \mathbb{R}^d \to \mathbb{R}^d$ with the behavior cloning objective:
\begin{equation}\label{eq:flow_bc}
    \mathcal{L}(\theta) = \mathbb{E}_{\substack{s, a \sim \mathcal{D},\, z \sim \mathcal{N}(0, I_d),\, u \sim \text{Unif}([0,1]),\, a_z = (1-u)z + ua}} \left[ \| v_\theta(u, s, a_z) - (a - z) \|_2^2 \right].
\end{equation}
We denote the ODE solution at $u=1$ starting from noise $z$ as $\pi_\theta(s, z) := \psi_\theta(1, s, z)$, which maps a noise vector $z \sim \mathcal{N}(0, I_d)$ to an action $a = \pi_\theta(s, z)$.
Note that $\pi_\theta(s, z)$ is a deterministic function of $z$, but the stochasticity of $z$ induces a conditional behavior distribution $\pi_\theta(\cdot | s)$.
Compared to Gaussian policies, flow policies can model complex, multi-modal action distributions, making them particularly suited for offline RL where datasets often contain diverse behavior patterns~\citep{FQL, MAC}.

\section{Method}

ACSAC is an action-chunking actor-critic that adaptively selects the chunk size at each replanning state, rather than fixing one across all states as in prior methods.
Throughout this section, $H$ is the \textbf{maximum chunk size}, and for $h \in [H] := \{1, \ldots, H\}$ the length-$h$ \textbf{prefix} of a chunk $a_{t:t+H} = (a_t, \ldots, a_{t+H-1})$ is its first $h$ actions $a_{t:t+h} = (a_t, \ldots, a_{t+h-1})$.
At each replanning state $s_t$, ACSAC samples $N$ length-$H$ candidate chunks $\{a^{(n)}_{t:t+H}\}_{n \in [N]}$ from a flow BC policy $\pi_\theta$, evaluates all $NH$ prefix-conditioned values $Q_\phi(s_t, a^{(n)}_{t:t+h})$ with a causal Transformer critic, and executes the prefix $a^{(n^\star)}_{t:t+h^\star}$ that achieves the joint argmax (Figure~\ref{fig:pipeline}); the same extraction rule provides the bootstrap action prefix in the multi-step TD loss used to train $Q_\phi$.
We describe the critic architecture in Section~\ref{sec:critic_arch}, the multi-step TD objective in Section~\ref{sec:td_objective}, and the flow BC policy with joint argmax extraction in Section~\ref{sec:adaptive_extraction}.

\begin{figure}[!t]
\centering
\vspace{-0.6em}
\includegraphics[width=\linewidth,trim=10pt 8pt 7pt 8pt,clip]{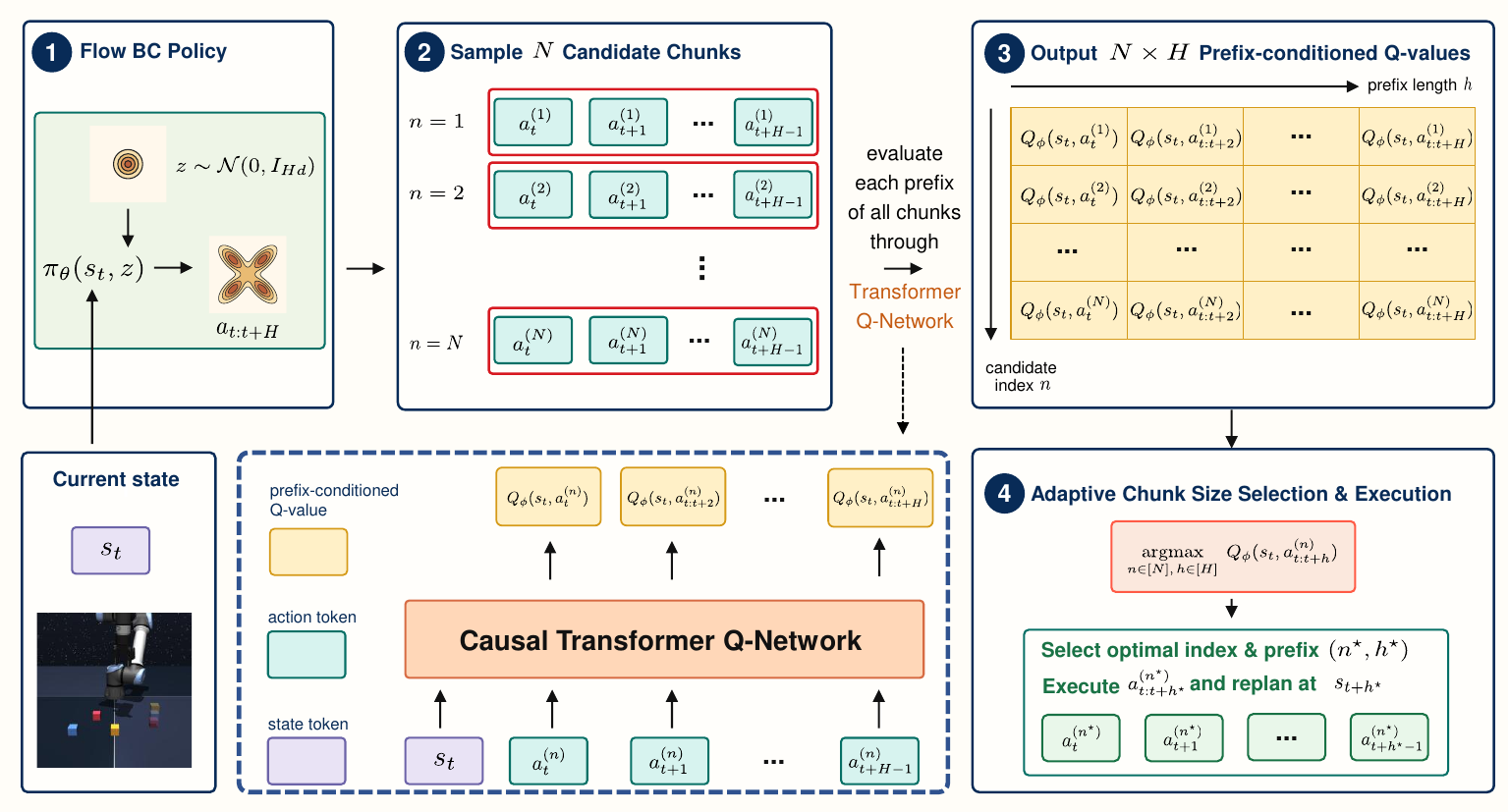}
\vspace{-0.6em}
\caption{\footnotesize \textbf{Adaptive policy extraction in ACSAC.}
At replanning state $s_t$, ACSAC samples $N$ length-$H$ chunks
$\{a^{(n)}_{t:t+H}\}_{n\in[N]}$ from the flow BC policy $\pi_\theta(s_t,z)$ with
$z^{(n)}\sim\mathcal{N}(0,I_{Hd})$, evaluates all prefix-conditioned values
$Q_\phi(s_t,a^{(n)}_{t:t+h})$ for $(n,h)\in[N]\times[H]$, and executes
$a^{(n^\star)}_{t:t+h^\star}$ where
$(n^\star,h^\star)=\arg\max_{n\in[N],\,h\in[H]}Q_\phi(s_t,a^{(n)}_{t:t+h})$.
The same extraction rule is used for bootstrap action sampling and deployment.}
\label{fig:pipeline}
\vspace{-1.0em}
\end{figure}

\subsection{Causal Transformer Critic Architecture}\label{sec:critic_arch}

ACSAC's joint argmax requires the critic to evaluate action chunks of different lengths $a_{t:t+h}$ ($h \in [H]$) from a single state $s_t$ and to produce values that are comparable across $h$.
An MLP critic on action chunks consumes fixed-length inputs, so reusing it on shorter sub-prefixes incurs causal leakage~\citep{T-SAC}.
A causal Transformer instead ingests $(s_t, a_{t:t+H})$ as a token sequence and outputs the $H$ prefix-conditioned values $\{Q_\phi(s_t, a_{t:t+h})\}_{h \in [H]}$ jointly, with a causal attention mask that restricts position $i$ to attend only to positions $j \le i$.

This design has three consequences.
First, $Q_\phi(s_t, a_{t:t+h})$ depends only on the prefix $a_{t:t+h}$ and not on future actions $a_{t+h:t+H}$, so each value is a valid estimate for executing exactly $h$ actions and the MLP-style causal leakage is eliminated~\citep{T-SAC}.
Second, the shared backbone produces sequence-aware value estimates that capture the temporal structure within the chunk and enable fine-grained credit assignment across positions~\citep{TOP-ERL, T-SAC}.
Third, because all $H$ values share a backbone and are jointly trained against per-horizon targets (Section~\ref{sec:td_objective}), they live on a common return scale, making the joint argmax in $\pi_\star$ meaningful across different prefix lengths.
We give a formal argument for prefix consistency and cross-horizon comparability in Appendix~\ref{app:theory}.

\subsection{Multi-Step TD Objective}\label{sec:td_objective}

We train the critic with a multi-step TD loss at every prefix length.
For each $h \in [H]$, define the $h$-step return target
\begin{equation}\label{eq:h_target}
    G_h(s_t, a_{t:t+h}) := \sum_{\tau=0}^{h-1} \gamma^\tau r_{t+\tau} + \gamma^{h}\, Q_{\bar\phi}\!\big(s_{t+h},\, \pi_\star(s_{t+h})\big),
\end{equation}
which sums $h$ on-policy rewards from the dataset and a bootstrap value from the target critic $Q_{\bar\phi}$ at the next state $s_{t+h}$.
The critic loss averages the squared error against $G_h$ across $h \in [H]$ in expectation over chunked transitions from the offline dataset or replay buffer:
\begin{equation}\label{eq:critic_loss}
    \mathcal{L}(\phi) = \mathbb{E}_{(s_{t:t+H+1},\, a_{t:t+H},\, r_{t:t+H}) \sim \mathcal{D}}\!\left[\frac{1}{H} \sum_{h=1}^{H} \big( Q_\phi(s_t, a_{t:t+h}) - G_h(s_t, a_{t:t+h}) \big)^2\right].
\end{equation}
Each $G_h$ propagates the bootstrap value at $s_{t+h}$ back $h$ steps to $Q_\phi(s_t, a_{t:t+h})$ in a single critic update~\citep{QC}.
Averaging gradients across prefix lengths further reduces gradient variance while preserving sparse reward signals, as formalized in Appendix~\ref{app:theory-multi-horizon}~\citep{T-SAC, TOP-ERL}.

For the bootstrap value $Q_{\bar\phi}(s_{t+h}, \pi_\star(s_{t+h}))$, ACSAC samples $N$ candidate chunks from $\pi_\theta$ at the next state $s_{t+h}$.
The selected prefix $\pi_\star(s_{t+h})$ maximizes $Q_{\bar\phi}(s_{t+h}, a^{(n)}_{t+h:t+h+h'})$ over $(n, h') \in [N] \times [H]$.
This generalizes EMaQ's expected-max Q operator~\citep{EMaQ}, replacing the $\max$ over $N$ candidate actions with a joint $\max$ over $NH$ candidate prefixes.
Appendix~\ref{app:theory} shows that the resulting Bellman backup is a $\gamma$-contraction whose unique fixed point is the action-value function of $\pi_\star$.

\subsection{Adaptive Policy Extraction}\label{sec:adaptive_extraction}

We use a flow BC policy $\pi_\theta(s, z)$ to generate candidate action chunks of length $H$ from the offline dataset $\mathcal{D}$.
$\pi_\theta$ is parameterized by a state-conditioned velocity field $v_\theta(u, s_t, a_z): [0,1] \times \mathcal{S} \times \mathbb{R}^{Hd} \to \mathbb{R}^{Hd}$ trained with the flow-matching loss
\begin{equation}\label{eq:flow_loss}
    \mathcal{L}(\theta) = \mathbb{E}_{\substack{z \sim \mathcal{N}(0, I_{Hd}), \, (s_t, a_{t:t+H}) \sim \mathcal{D}, \\ u \sim \text{Unif}([0,1]), \, a_z = (1-u)z + ua_{t:t+H}}} \left[ \| v_\theta(u, s_t, a_z) - (a_{t:t+H} - z) \|_2^2 \right].
\end{equation}
Fitting full-length action chunks lets $\pi_\theta$ capture non-Markovian behavior patterns in the offline data, supporting temporally coherent exploration in long-horizon tasks~\citep{QC}.

Given $\pi_\theta$ and the trained critic $Q_\phi$, we extract ACSAC's policy $\pi_\star$ via rejection sampling, 
which implicitly enforces a behavior constraint with a closed-form bound on the KL divergence from $\pi_\theta$~\citep{QC} and is generally robust to hyperparameters, unlike alternatives that require tuning a behavior regularization coefficient~\citep{MAC}.
Prior action-chunking methods~\citep{QC, DQC, MAC} fix the chunk size $h$ and apply rejection sampling: with $n^\star = \arg\max_{n \in [N]} Q_\phi(s_t, a^{(n)}_{t:t+h})$, the policy outputs $\pi(s_t) = a^{(n^\star)}_{t:t+h}$.
ACSAC extends this single-axis rejection sampling to a joint search over $NH$ candidates indexed by $(n, h)$:
\begin{equation}\label{eq:adaptive_policy}
    (n^\star, h^\star) = \arg\max_{n \in [N],\, h \in [H]} Q_\phi\!\left(s_t, a^{(n)}_{t:t+h}\right),
    \qquad
    \pi_\star(s_t) = a^{(n^\star)}_{t:t+h^\star},
\end{equation}
where $\{a^{(n)}_{t:t+H}\}_{n \in [N]}$ are obtained by drawing $N$ noise vectors $z^{(n)} \sim \mathcal{N}(0, I_{Hd})$ and mapping them through the flow BC policy: $a^{(n)}_{t:t+H} = \pi_\theta(s_t, z^{(n)})$.
The Transformer critic provides Q-values for all $NH$ pairs, and $(n^\star, h^\star)$ is selected by a single argmax.

Intuitively, the selected execution horizon $h^\star$ reflects the current phase of the policy rollout at state $s_t$.
When the Q-values of longer prefixes decline relative to shorter ones, the agent should replan after fewer steps to maintain reactivity.
Conversely, when Q-values stay high or increase with longer prefixes, the state admits a coherent long-horizon plan and the agent benefits from a longer prefix to maintain temporal consistency.
The selected execution horizon $h^\star \in [H]$ thus varies per state, with shorter $h^\star$ at sensitive states and longer $h^\star$ at states admitting coherent long-horizon plans.
Importantly, the joint argmax across prefix lengths relies on ACSAC's prefix-conditioned Q-values lying on a common return scale.
We prove this cross-horizon comparability in Appendix~\ref{app:theory-comparability} and verify it empirically in Section~\ref{sec:qualitative-quantitative-analysis}.

We provide pseudocode for the full offline pretraining and online fine-tuning procedures in Algorithm~\ref{alg:training}.
Further implementation details are in Appendix~\ref{app:impl-details}.

\begin{algorithm}[h]
\caption{Adaptive Chunk Size Actor-Critic (ACSAC)}
\label{alg:training}
\begin{algorithmic}
    \small
    \Require Dataset $\mathcal{D}$, maximum chunk size $H$, rejection sampling size $N$, flow BC policy $\pi_\theta$, causal Transformer critic $Q_\phi$

    \State
    \State \blue{\emph{// Offline training loop}}
    \While{not converged}
        \State Sample chunked batch $\{(s_{t:t+H+1}, a_{t:t+H}, r_{t:t+H})\} \sim \mathcal{D}$
        \State Update flow BC policy $\pi_\theta$ with the flow-matching loss in Equation~\ref{eq:flow_loss}.
        \State Update causal Transformer critic $Q_\phi$ with the multi-step TD loss in Equation~\ref{eq:critic_loss}.
    \EndWhile

    \State
    \State \blue{\emph{// Adaptive policy extraction from flow BC policy $\pi_\theta$ with rejection sampling}}
    \Function{$\pi_\star$}{$s_t$}
        \State $z^{(n)} \sim \mathcal{N}(0, I_{Hd}),\ n \in [N]$
        \State $a^{(n)}_{t:t+H} = \pi_\theta(s_t, z^{(n)}),\ n \in [N]$
        \State $(n^\star, h^\star) \gets
        \arg\max_{n \in [N],\ h \in [H]}
        Q_\phi(s_t, a^{(n)}_{t:t+h})$
        \State \Return $a^{(n^\star)}_{t:t+h^\star}$
    \EndFunction

    \State
    \State \blue{\emph{// Online fine-tuning with adaptive replanning}}
    \State Initialize $\mathcal{D}$ with offline data.
    \For{every environment step $t$}
        \If{the previous selected chunk has been fully executed}
            \State $a^\star_{t:t+h^\star} \leftarrow \pi_\star(s_t)$
        \EndIf
        \State Act with $a^\star_t$ and receive $s_{t+1}, r_t$.
        \State $\mathcal{D} \leftarrow \mathcal{D} \cup \{(s_t, a^\star_t, s_{t+1}, r_t)\}$
        \State Update $\pi_\theta$ via the flow-matching loss in Equation~\ref{eq:flow_loss} using $\mathcal{D}$.
        \State Update $Q_\phi$ via the multi-step TD loss in Equation~\ref{eq:critic_loss} using $\mathcal{D}$.
    \EndFor
\end{algorithmic}
\end{algorithm}

\section{Experiments}
\label{sec:experiments}

We evaluate ACSAC on long-horizon, sparse-reward robotic manipulation tasks from OGBench~\citep{OGBench}.
Our experiments are designed to answer three questions.
\textbf{Q1.} Does ACSAC improve offline-to-online RL performance over prior single-step, multi-step, and fixed chunk size methods?
\textbf{Q2.} Does ACSAC's adaptive chunk size selection follow task phases, and are its prefix-conditioned Q-values calibrated and cross-horizon comparable?
\textbf{Q3.} How do the design choices of ACSAC, including the maximum chunk size, the rejection sampling size, and adaptive chunk size selection itself, affect performance?

\subsection{Experimental Setup}
\label{sec:exp-setup}

\textbf{Environments and datasets.}
We evaluate ACSAC on OGBench manipulation tasks, which contain challenging long-horizon, sparse-reward robotic manipulation problems.
We consider five domains with varying difficulties:
\texttt{scene-sparse}, \texttt{puzzle-3x3-sparse}, \texttt{cube-double}, \texttt{cube-triple}, and \texttt{cube-quadruple}.
Each domain contains five single-task variants, giving $25$ tasks in total.
We follow the \texttt{QC} dataset protocol for its five offline-to-online domains, including the $100$M-transition dataset for \texttt{cube-quadruple}~\citep{QC}.
These domains are well suited for evaluating adaptive action chunking because they require both long-range value propagation and reactive manipulation at precise task phases.
Additional domain metadata is provided in Appendix~\ref{app:domain}.

\textbf{Baselines.}
We compare ACSAC against single-step, multi-step, and fixed chunk size methods.
The single-step baselines are \texttt{IQL}~\citep{IQL}, \texttt{ReBRAC}~\citep{ReBRAC}, \texttt{FQL}~\citep{FQL}, and \texttt{BFN}~\citep{QC}.
The multi-step baselines are \texttt{FQL-n}~\citep{QC} and \texttt{BFN-n}~\citep{QC}, which use multi-step returns without learning both a chunked critic and a chunked policy.
Finally, we compare against fixed chunk size methods, including \texttt{QC}~\citep{QC}, \texttt{QC-FQL}~\citep{QC}, and \texttt{DEAS}~\citep{DEAS}.
Implementation details for all baselines are summarized in Appendix~\ref{app:impl-details}.

\newcommand{\otocicolored}[8]{\ensuremath{{\color{#1}\cibval{#2}{#3}{#4}}\!\rightarrow\!{\color{#5}\cibval{#6}{#7}{#8}}}}
\newcommand{\sotooff}[6]{\otocicolored{ourblue}{#1}{#2}{#3}{gray!60}{#4}{#5}{#6}}
\newcommand{\sotoon}[6]{\otocicolored{gray!60}{#1}{#2}{#3}{ourblue}{#4}{#5}{#6}}
\newcommand{\motooff}[6]{\otocicolored{ourmiddle}{#1}{#2}{#3}{gray!60}{#4}{#5}{#6}}
\newcommand{\motoon}[6]{\otocicolored{gray!60}{#1}{#2}{#3}{ourmiddle}{#4}{#5}{#6}}
\newcommand{\fotooff}[6]{\otocicolored{ourpurple}{#1}{#2}{#3}{gray!60}{#4}{#5}{#6}}
\newcommand{\fotoon}[6]{\otocicolored{gray!60}{#1}{#2}{#3}{ourpurple}{#4}{#5}{#6}}
\newcommand{\aotogray}[6]{{\color{gray!60}\otocib{#1}{#2}{#3}{#4}{#5}{#6}}}
\newcommand{\aotooff}[6]{\otocicolored{ourgreen}{#1}{#2}{#3}{gray!60}{#4}{#5}{#6}}
\newcommand{\aotoon}[6]{\otocicolored{gray!60}{#1}{#2}{#3}{ourgreen}{#4}{#5}{#6}}

\begin{table*}[h]
\centering
\renewcommand{\arraystretch}{1.35}
\setlength{\extrarowheight}{2pt}
\makebox[\textwidth]{\scalebox{0.7}{%
\begin{tabular}{llcccccc}
\toprule
 & & \texttt{puzzle-3x3-sparse} & \texttt{scene-sparse} & \texttt{cube-double} & \texttt{cube-triple} & \texttt{cube-quadruple} & \texttt{overall} \\
 & & \texttt{(5 tasks)} & \texttt{(5 tasks)} & \texttt{(5 tasks)} & \texttt{(5 tasks)} & \texttt{(5 tasks)} & \texttt{(25 tasks)} \\
\midrule
\rowcolor{ourlightblue} & \texttt{IQL} & \soto{0}{0}{0}{20}{19}{20} & \soto{0}{0}{0}{39}{39}{39} & \soto{0}{0}{0}{0}{0}{0} & \soto{0}{0}{0}{0}{0}{0} & \soto{0}{0}{0}{0}{0}{0} & \soto{0}{0}{0}{12}{12}{12} \\
\rowcolor{ourlightblue} & \texttt{ReBRAC} & \sotoon{55}{47}{65}{100}{100}{100} & \soto{11}{7}{16}{99}{99}{99} & \soto{3}{2}{3}{30}{28}{32} & \soto{0}{0}{0}{0}{0}{0} & \soto{1}{0}{2}{20}{20}{20} & \soto{14}{13}{15}{50}{49}{50} \\
\rowcolor{ourlightblue} & \texttt{FQL} & \sotob{100}{99}{100}{100}{100}{100} & \soto{57}{55}{60}{95}{93}{98} & \soto{28}{23}{34}{76}{75}{76} & \soto{1}{1}{2}{18}{16}{20} & \soto{0}{0}{0}{3}{0}{6} & \soto{37}{36}{38}{58}{57}{60} \\
\rowcolor{ourlightblue}\multirow{-4}{*}{Single-step} & \texttt{BFN} & \sotoon{98}{96}{99}{100}{100}{100} & \soto{85}{82}{87}{99}{99}{100} & \soto{68}{63}{73}{79}{78}{81} & \soto{4}{3}{5}{23}{21}{28} & \soto{1}{1}{2}{12}{12}{13} & \soto{51}{50}{52}{63}{62}{63} \\
\midrule
\rowcolor{ourlightmiddle} & \texttt{FQL-n} & \motoon{98}{95}{100}{100}{100}{100} & \moto{18}{17}{21}{70}{64}{76} & \moto{11}{7}{13}{77}{76}{77} & \moto{0}{0}{0}{1}{0}{1} & \moto{7}{6}{8}{37}{36}{37} & \moto{27}{25}{28}{57}{56}{58} \\
\rowcolor{ourlightmiddle}\multirow{-2}{*}{Multi-step} & \texttt{BFN-n} & \moto{58}{55}{61}{90}{89}{92} & \moto{57}{50}{64}{97}{95}{99} & \moto{11}{10}{13}{65}{62}{69} & \moto{0}{0}{1}{0}{0}{1} & \moto{0}{0}{0}{0}{0}{0} & \moto{25}{24}{26}{50}{48}{52} \\
\midrule
\rowcolor{ourlightpurple} & \texttt{QC-FQL} & \fotoon{63}{48}{76}{100}{100}{100} & \foto{84}{81}{87}{99}{99}{100} & \fotoon{39}{32}{47}{100}{100}{100} & \foto{4}{3}{5}{53}{46}{61} & \fotoon{1}{1}{2}{77}{76}{77} & \foto{38}{35}{41}{86}{84}{88} \\
\rowcolor{ourlightpurple} & \texttt{QC} & \fotob{100}{99}{100}{100}{100}{100} & \foto{84}{79}{89}{99}{98}{100} & \foto{67}{64}{71}{98}{96}{99} & \foto{6}{3}{10}{64}{61}{66} & \foto{4}{2}{7}{74}{72}{75} & \foto{52}{50}{53}{86}{86}{87} \\
\rowcolor{ourlightpurple}\multirow{-3}{*}{\shortstack[c]{Fixed\\Chunk Size}} & \texttt{DEAS} & \fotoon{95}{92}{98}{100}{100}{100} & \foto{93}{91}{95}{95}{91}{98} & \foto{56}{50}{64}{85}{83}{86} & \foto{4}{4}{6}{69}{66}{73} & \fotooff{18}{16}{21}{22}{20}{25} & \foto{53}{50}{56}{74}{72}{77} \\
\midrule
\rowcolor{ourlightgreen}\textbf{\shortstack[c]{Adaptive\\Chunk Size}}
 & \texttt{\textbf{ACSAC}} & \aoto{100}{100}{100}{100}{100}{100} & \aoto{99}{99}{100}{100}{100}{100} & \aoto{84}{82}{87}{100}{99}{100} & \aoto{19}{14}{25}{97}{95}{99} & \aotogray{5}{1}{10}{73}{71}{74} & \aoto{61}{59}{64}{94}{93}{94} \\
\bottomrule
\end{tabular}
}}
\vspace{1mm}
\caption{\footnotesize \textbf{Summary table for OGBench offline-to-online RL results.}
For each cell, we report the offline performance after $1$M of training steps and then the online performance after $1$M of additional online steps.
The best method(s) for each column is highlighted in bold and color.
ACSAC consistently outperforms all prior single-step, multi-step, and fixed chunk size action-chunking baselines at the end of both the offline training and the online training.
See the full per-task results in Table~\ref{tab:singletask} (complete table) and Figure~\ref{fig:full-ogbench-curves} (individual training curves).
All values are means over $4$ seeds with $95\%$ confidence intervals.}
\label{tab:multitask}
\end{table*}

\subsection{Main Results}
\label{sec:main-results}

We report the main offline-to-online RL results in Table~\ref{tab:multitask}.
ACSAC achieves the best overall performance across the five-domain suite, outperforming single-step, multi-step, and fixed chunk size action-chunking baselines in both the offline phase and the offline-to-online phase.
The improvement is most visible on the long-horizon \texttt{cube-triple} and \texttt{scene-sparse} domains, where adaptive chunk size selection is beneficial when an episode alternates between coarse transport and precise manipulation.

Fixed chunk size methods such as \texttt{QC}, \texttt{QC-FQL}, and \texttt{DEAS} substantially improve over single-step and multi-step flow baselines on the long-horizon cube domains, which alone demonstrates the importance of action chunking.
A fixed chunk size, however, imposes a single trade-off between reactivity and temporal consistency throughout the entire episode.
ACSAC adaptively selects the execution horizon at each replanning state, retaining the fast value backups of long chunks while avoiding unnecessary open-loop execution in states that require precise feedback.

The main exception is \texttt{cube-quadruple}, where fixed chunk size methods remain highly competitive online.
A plausible reason is that the default maximum chunk size is still short relative to the temporal structure of a four-object manipulation task, so adaptivity alone cannot cover a full behavior segment.
Complete per-task results and training curves are in Appendix~\ref{app:full-o2o-results}.

\subsection{Qualitative and Quantitative Analyses}
\label{sec:qualitative-quantitative-analysis}

\begin{wrapfigure}{r}{0.4\textwidth}
    \vspace{-0.3em}
    \centering
    \raisebox{0pt}[\dimexpr\height-1.0\baselineskip\relax]{
        \includegraphics[width=0.97\linewidth]{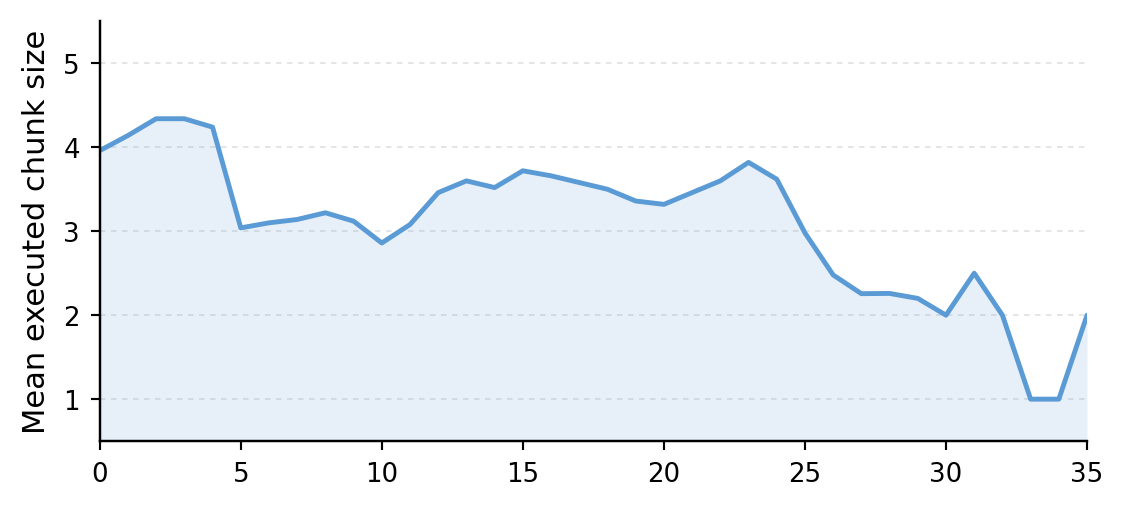}
    }
    \vspace{-2.0em}
    \caption{\footnotesize \textbf{Distribution of chunk size decisions from ACSAC.}
    Mean executed chunk size at each observation timestep on a representative \texttt{cube-double} pick-and-place task, averaged over $50$ episodes of the online checkpoint.}
    \label{fig:h_dist}
    \vspace{-1.5em}
\end{wrapfigure}
\textbf{Distribution of chunk size decisions.}
We visualize in Figure~\ref{fig:h_dist} the mean executed chunk size at each observation timestep on a representative \texttt{cube-double} pick-and-place task, averaged over $50$ episodes of the online checkpoint.
The curve closely follows the semantic phases of the task: ACSAC commits to large chunks early ($t<5$, mean $\approx 4$) for the coarse approach toward the cube, intermediate chunks during grasp and transport ($5\le t\le 25$, mean $\approx 3$), and progressively smaller chunks toward precise placement, reaching mean $\approx 1$ around $t=33$ as the cube is aligned at the target.
This adaptive pattern mirrors the qualitative finding of~\citet{AAC} for vision-language-action models: larger chunks enable fast, coarse movements while smaller chunks with high-frequency replanning ensure precise control during the critical grasping and placement phases.
Unlike~\citet{AAC}, where chunk size is driven by predicted action entropy, in ACSAC it emerges directly from the learned prefix-conditioned Q-values.

\begin{wrapfigure}{r}{0.38\textwidth}
    \vspace{-0.3em}
    \centering
    \raisebox{0pt}[\dimexpr\height-1.0\baselineskip\relax]{
        \includegraphics[width=0.97\linewidth]{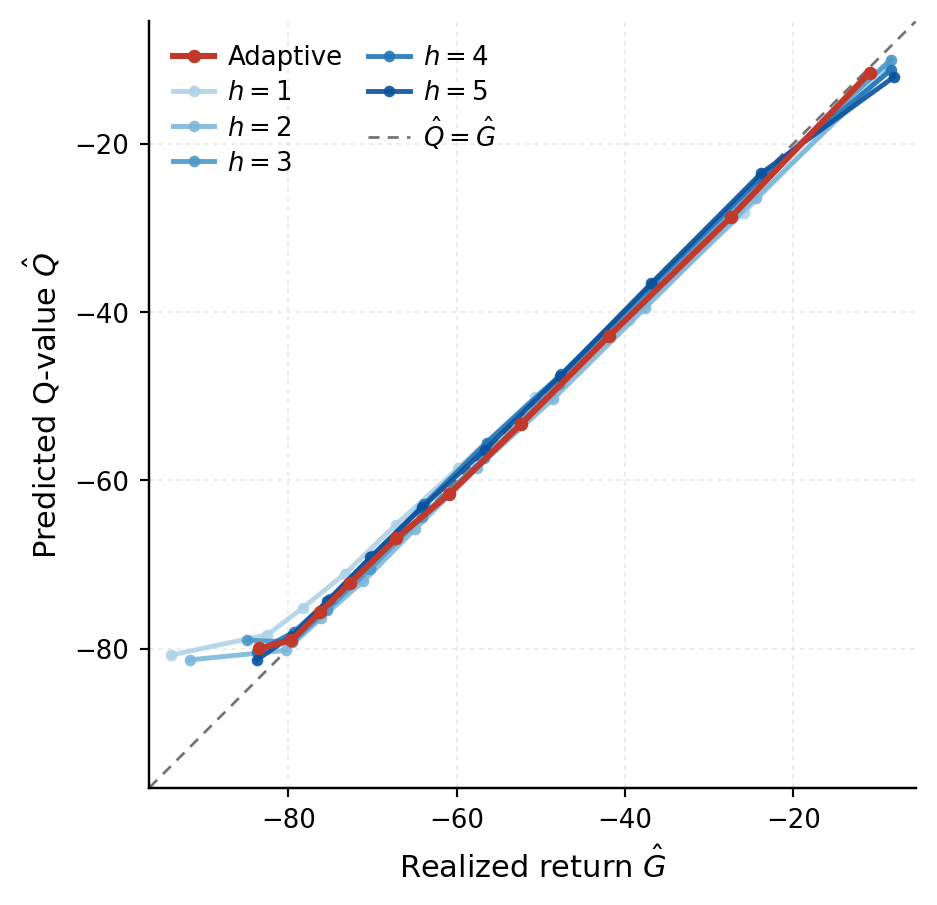}
    }
    \vspace{-2.0em}
    \caption{\footnotesize \textbf{Prefix-Q calibration.}
    Binned predicted Q-value $\hat Q$ versus realized Monte-Carlo return $\hat G$ over $50$ rollouts of the online checkpoint, for the deployed adaptive policy and five fixed-$h$ controls.}
    \label{fig:q_mc}
    \vspace{-1.5em}
\end{wrapfigure}
\textbf{Prefix-Q calibration and cross-horizon comparability.}
We test whether ACSAC's prefix-conditioned Q-values $\hat Q$ are calibrated against realized returns and comparable across horizons.
We collect $50$ rollouts of the deployed adaptive policy and of five fixed-$h$ controls, each selecting only among $N$ candidates of length $h$.
Along all rollouts we record $\hat Q$ and discounted Monte-Carlo returns $\hat G$, then plot the mean $\hat Q$ in equal-frequency bins of $\hat G$.
Deviation from the diagonal $\hat Q = \hat G$ quantifies over- or under-estimation.
As shown in Figure~\ref{fig:q_mc}, all six curves track the diagonal closely and largely coincide along the return range.
This indicates that ACSAC's prefix Q-values are individually calibrated and mutually comparable across $h$.
The joint $\arg\max$ over the $N{\times}H$ candidates is therefore a principled deployment rule.

\subsection{Ablation Study}
\label{sec:ablations}

We ablate the main design choices of ACSAC on the \texttt{cube-double} and \texttt{cube-triple} domains.

\begin{figure}[!t]
    \centering
    \vspace{-0.7em}
    \includegraphics[width=0.75\linewidth,trim=28pt 6pt 4pt 6pt,clip]{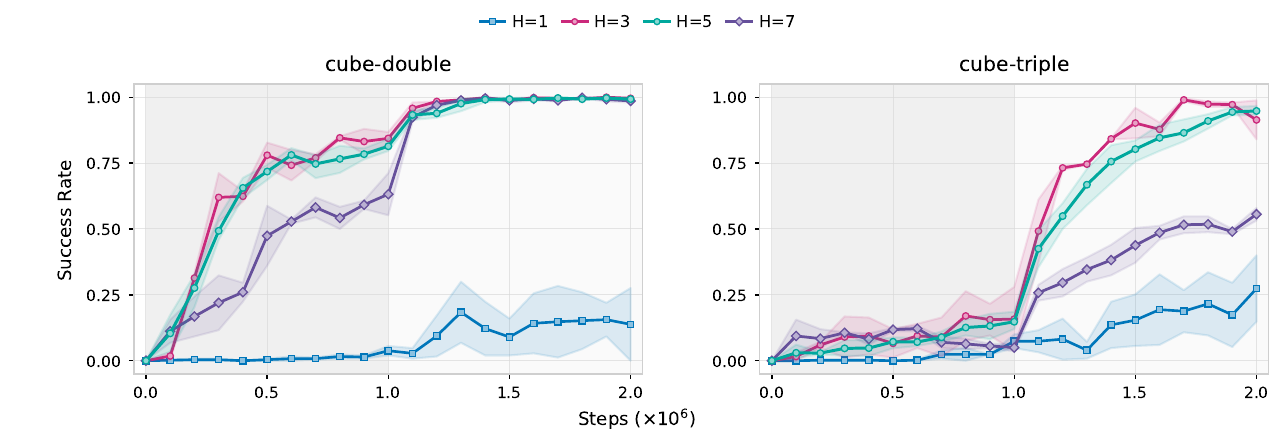}\\[0.5em]
    \includegraphics[width=0.75\linewidth,trim=28pt 6pt 4pt 6pt,clip]{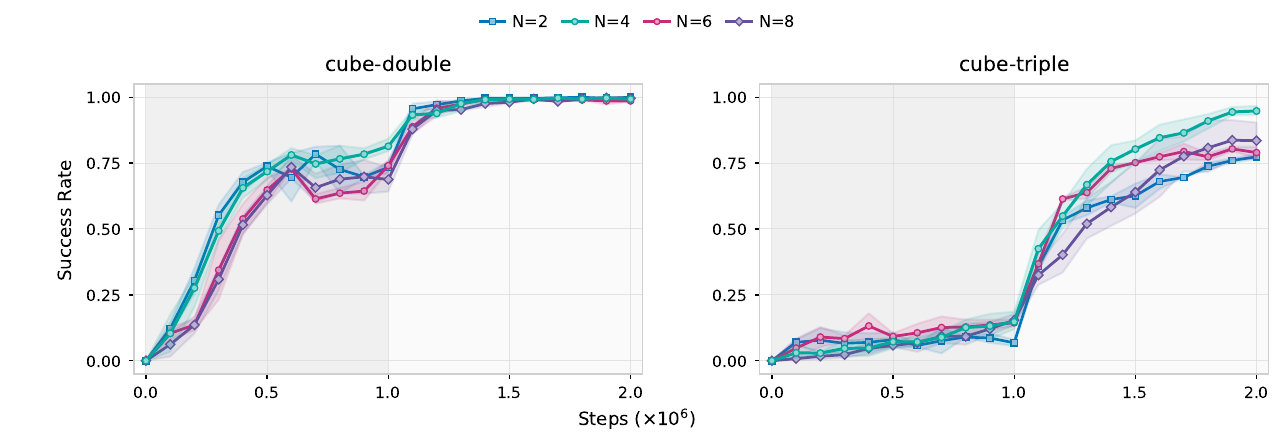}\\[0.5em]
    \includegraphics[width=0.75\linewidth,trim=28pt 6pt 4pt 6pt,clip]{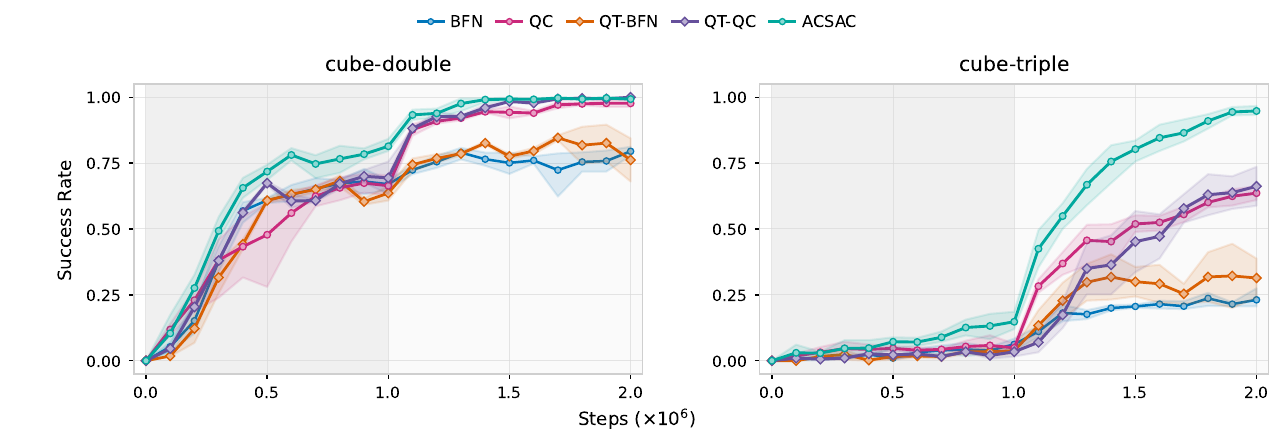}
    \vspace{-0.6em}
    \caption{\footnotesize \textbf{Ablation studies.}
    \emph{Top:} maximum chunk size $H$ sweep.
    \emph{Middle:} rejection sampling size $N$ sweep.
    \emph{Bottom:} same-architecture controls QT-BFN and QT-QC replacing the MLP critics in BFN and QC with ACSAC's causal Transformer critic.
    Curves aggregate five tasks per domain. The first $1$M steps are offline and the next $1$M steps are online.}
    \label{fig:ablations}
    \vspace{-0.9em}
\end{figure}

In Figure~\ref{fig:ablations} (top), we sweep the maximum chunk size $H \in \{1, 3, 5, 7\}$ around the default $H{=}5$.
$H{=}1$ fails on both domains, confirming that single-step execution cannot provide the value propagation needed for these sparse-reward tasks.
$H{=}3$ and $H{=}5$ perform comparably on both domains, with $H{=}3$ slightly ahead on \texttt{cube-triple}.
Increasing to $H{=}7$ hurts \texttt{cube-triple}, consistent with the observation that overly long chunks make the behavior policy and critic harder to learn~\citep{QC, DQC, MAC}.
We use $H{=}5$ in all other experiments to allow the adaptive policy a broader range of execution horizons.

In Figure~\ref{fig:ablations} (middle), we vary the rejection sampling size $N \in \{2, 4, 6, 8\}$ around the default $N{=}4$.
$N{=}2$ weakens value-based action selection, especially online on \texttt{cube-triple}, while increasing $N$ to $6$ or $8$ saturates with no consistent gain.
We use $N{=}4$ as the best overall trade-off.

Finally, in Figure~\ref{fig:ablations} (bottom), we isolate the contribution of the causal Transformer critic with two same-architecture controls.
QT-QC and QT-BFN replace the MLP critics of QC and BFN with ACSAC's causal Transformer critic, keeping every other component unchanged from~\citet{QC}.
Both controls improve over the original baselines but do not close the gap to ACSAC, most clearly on \texttt{cube-triple}.
This indicates that ACSAC's advantage is not merely a Transformer-critic effect.
The multi-horizon TD objective in Equation~\ref{eq:critic_loss} and the joint $\arg\max$ over $(n, h)$ in Equation~\ref{eq:adaptive_policy} are essential.

\section{Conclusion}
\label{sec:conclusion}
We propose ACSAC, which enables adaptive chunk selection through cross-horizon calibrated prefix value learning.
By learning prefix-conditioned Q-values on a shared discounted-return scale, ACSAC allows executable prefixes with different horizons to be compared consistently, thereby supporting adaptive replanning behavior without manual chunk-size tuning.
A causal Transformer critic trained with a multi-step TD objective produces these prefix-conditioned values.
A flow BC policy paired with rejection sampling over the joint $(n, h)$ axes yields the executed chunk.
On long-horizon, sparse-reward OGBench manipulation tasks, ACSAC outperforms single-step, multi-step, and fixed chunk size baselines in both offline and offline-to-online settings.
A promising future direction is to integrate state-dependent chunk sizes with vision-language-action models.
Recent work shows that action-sequence critics scale to large VLAs in both simulation and real-world experiments~\citep{DEAS, CO-RFT}, bringing RL closer to real-world applications.

\bibliographystyle{plainnat}
\bibliography{acsac}

\newpage
\appendix
\renewcommand{\theHtable}{\thesection.\arabic{table}}
\renewcommand{\theHfigure}{\thesection.\arabic{figure}}
\section{Limitations}
\label{app:limitations}

We highlight three limitations of ACSAC and corresponding directions for future work.
First, larger $H$ aggravates the multi-modality of behavior chunks~\citep{DEAS} and larger $N$ scales the per-step cost, so jointly scaling $(H, N)$ together with the Transformer critic capacity is a natural next step.
Second, our evaluation focuses on long-horizon manipulation from OGBench and excludes navigation domains such as \texttt{antmaze} and \texttt{humanoidmaze}, where action-chunked methods are known to be less effective due to highly reactive control and fine-grained trajectory stitching~\citep{CGQ, MAC, DQC}.
Our theoretical analysis also assumes deterministic transitions (Appendix~\ref{app:theory})~\citep{DQC}, leaving stochastic environments as a promising extension.
Third, we do not evaluate ACSAC on real-world robotic deployment or on large vision-language-action backbones, but recent work shows that chunked value learning scales to such settings~\citep{DEAS, CO-RFT}, presenting a natural extension.

\section{Implementation Details}
\label{app:impl-details}

\paragraph{Online fine-tuning.}
For offline-to-online RL, we simply add online transitions to the dataset, without distinguishing them from the offline transitions.
We continue to train each method with the same objective as in offline training, following~\citet{FQL} and~\citet{QC}.

\paragraph{Flow matching.}
Following~\citet{QC}, each action-chunking baseline trains a flow BC policy on the dataset, parameterized by a state-conditioned velocity field $v_\theta$ trained with the flow-matching loss in Equation~\ref{eq:flow_loss}.
At inference, an action chunk is generated by Euler integration of $v_\theta$ over $F$ flow steps from $a^{0} = z \sim \mathcal{N}(0, I_{Hd})$.
Single-action methods such as \texttt{FQL} and \texttt{BFN} train the velocity field on individual actions instead of action chunks.

\paragraph{Causal Transformer.}
ACSAC ingests $(s_t, a_t, a_{t+1}, \ldots, a_{t+H-1})$ as a token sequence and outputs $H$ heads, $[Q_\phi(s_t, a_{t}), Q_\phi(s_t, a_{t:t+2}), \ldots, Q_\phi(s_t, a_{t:t+H})]$, in one forward pass.
A causal attention mask ensures that the $h$-th head depends only on $a_{t:t+h}$.
Pre-LayerNorm is applied before every attention and feed-forward sublayer.

\paragraph{Value learning.}
Following standard practice, all methods train two Q-networks for stability.
ACSAC takes the minimum of the two Q-values~\citep{TD3+BC} for both the bootstrap target and the policy extraction, while each baseline retains its original aggregation rule.
ACSAC's bootstrap target uses the current online critic with the gradient stopped.
SEEM~\citep{SEEM} shows that LayerNorm bounds the critic's NTK, allowing stable training even when online and target networks share parameters.

\section{Baselines}
\label{app:baselines}

\paragraph{FQL.}
\texttt{FQL}~\citep{FQL} is a behavior regularization-based offline / offline-to-online method that distills a one-step noise-conditioned policy from a flow BC policy with a single-step MLP critic.

\paragraph{FQL-n.}
\texttt{FQL-n}~\citep{QC} is a multi-step return variant of \texttt{FQL} that replaces the single-step TD target with a multi-step return at the chunked horizon.

\paragraph{QC-FQL and QC.}
\texttt{QC-FQL}~\citep{QC} extends \texttt{FQL} to the chunked action space.
\texttt{QC}~\citep{QC} replaces \texttt{QC-FQL}'s distilled one-step policy with rejection sampling over the flow BC policy.
We reproduce both with the official implementation released by~\citet{QC}.

\paragraph{BFN and BFN-n.}
\texttt{BFN}~\citep{QC} runs \texttt{QC}'s rejection sampling extraction in the original single-action space.
\texttt{BFN-n}~\citep{QC} additionally uses a multi-step TD target.

\paragraph{DEAS.}
\texttt{DEAS}~\citep{DEAS} extends action-chunked offline RL with detached value learning, distributional RL with fixed support, and dual discount factors for stable training.
We reproduce \texttt{DEAS} with the official implementation released by~\citet{DEAS}, following its cube-task hyperparameters and~\citet{CGQ}'s per-task entries for \texttt{scene-sparse} and \texttt{puzzle-3x3-sparse}.

\paragraph{QT-QC and QT-BFN.}
Same-architecture controls that replace \texttt{QC} and \texttt{BFN}'s MLP critics with ACSAC's causal Transformer critic, keeping every other component unchanged from~\citet{QC}.

\section{Hyperparameters}
\label{app:impl-hyperparams}

\subsection{Shared hyperparameters}
\label{app:hyperparams-shared}

We report in Table~\ref{tab:hyperparams-shared} the hyperparameters shared across all methods in our experiments.

\begin{table}[h]
    \centering
    \small
    \begin{tabular}{@{}p{0.55\linewidth}p{0.37\linewidth}@{}}
        \toprule
        \textbf{Hyperparameter} & \textbf{Value} \\
        \midrule
        Optimizer & Adam \\
        Learning rate & $3 \times 10^{-4}$ \\
        Batch size & $256$ \\
        Discount factor $\gamma$ & $0.99$ \\
        Target network update rate $\tau$ (when used) & $5 \times 10^{-3}$ \\
        UTD ratio & $1$ \\
        Number of flow steps $F$ & $10$ \\
        Critic ensemble size $K$ & $2$ \\
        Offline training steps & $10^{6}$ \\
        Online environment steps & $10^{6}$ \\
        Evaluation interval & $10^{5}$ steps \\
        Evaluation episodes & $50$ \\
        \midrule
        MLP network width & $512$ \\
        MLP network depth & $4$ hidden layers \\
        \bottomrule
    \end{tabular}
    \vspace{2mm}
    \caption{\footnotesize \textbf{Shared hyperparameters across all methods.}}
    \label{tab:hyperparams-shared}
\end{table}

\subsection{Task-specific hyperparameters}
\label{app:hyperparams-task}

We report in Table~\ref{tab:hyperparams-task} the per-task hyperparameter choices of each method.

\begin{table*}[h]
    \centering
    \small
    \resizebox{\linewidth}{!}{
    \begin{tabular}{@{}lcccc@{}}
        \toprule
        \multirow{2}{*}{\textbf{Domain}} &
        \textbf{ACSAC} &
        \textbf{QC-FQL} &
        \textbf{QC} &
        \textbf{DEAS} \\
        & $(H,N)$ & $(\alpha,h)$ & $(h,N)$ & $(\alpha,h,\gamma_1,\gamma_2,s)$ \\
        \midrule
        \texttt{scene-sparse-*} & $(5,4)$ & $(300,5)$ & $(5,32)$ & $(3,4,0.99,0.999,u)$ \\
        \texttt{puzzle-3x3-sparse-*} & $(5,4)$ & $(300,5)$ & $(5,64)$ & $(3,8,0.9,0.99,u)$ \\
        \texttt{cube-double-*} & $(5,4)$ & $(300,5)$ & $(5,32)$ & $(300,4,0.9,0.995,d)$ \\
        \texttt{cube-triple-*} & $(5,4)$ & $(100,5)$ & $(5,32)$ & $(1,4,0.9,0.995,d)$ \\
        \texttt{cube-quadruple-100M-*} & $(5,8)$ & $(100,5)$ & $(5,32)$ & $(1,4,0.9,0.995,d)$ \\
        \bottomrule
    \end{tabular}}
    \vspace{2mm}
    \caption{\footnotesize \textbf{Task-specific hyperparameters.}
    The layout follows~\citet{CGQ}'s all-method task-specific hyperparameter table.
    For ACSAC, $H$ is the maximum chunk size and $N$ is the rejection sampling size.
    The Transformer critic uses $n_{\mathrm{layer}}{=}2$ attention layers, $n_{\mathrm{head}}{=}8$ heads, and per-head dimension $d_{\mathrm{head}}{=}16$ across all tasks.
    For \texttt{QC-FQL}, $\alpha$ is the behavior regularization coefficient and $h$ is the chunk size.
    For \texttt{QC}, $h$ is the chunk size and $N$ is the rejection sampling size.
    For \texttt{DEAS}, $d$ and $u$ denote data-centric and universal support, respectively.}
    \label{tab:hyperparams-task}
\end{table*}

\section{Experiment Details}
\label{app:experiment-details}

\subsection{Environments, Tasks, and Datasets}
\label{app:domain}

\textbf{OGBench~\citep{OGBench}.}
We evaluate ACSAC on five long-horizon OGBench manipulation domains: \texttt{scene-sparse}, \texttt{puzzle-3x3-sparse}, \texttt{cube-double}, \texttt{cube-triple}, and \texttt{cube-quadruple}.
Following~\citet{QC}, \texttt{scene-sparse} and \texttt{puzzle-3x3-sparse} sparsify the rewards of OGBench's \texttt{scene-play} and \texttt{puzzle-3x3-play} to $\{-1, 0\}$, where $-1$ is given when the task is incomplete and $0$ when it is completed.
Each domain contains five single-task variants, giving $25$ OGBench tasks in total.
These domains are particularly suitable for studying action chunking because successful behavior requires both long-range value propagation and temporally coherent action sequences.
Dataset size, episode length, and action dimension for each domain are listed in Table~\ref{tab:domain-metadata}.

\textbf{\texttt{scene-sparse}.}
This domain contains a drawer, a window, a cube, and two button locks that control whether the drawer and the window can be opened.
Tasks require multi-stage manipulation, such as unlocking the scene, moving the drawer or window, placing the cube, and relocking the scene.
The long sequence of necessary sub-behaviors makes this domain sensitive to slow value propagation and incoherent short-horizon exploration.

\textbf{\texttt{puzzle-3x3-sparse}.}
This domain contains a $3 \times 3$ grid of buttons.
Pressing a button flips its own state and the states of adjacent buttons.
The task is to reach a target color configuration.
Because local actions can have coupled downstream effects, the domain tests whether a policy can execute coherent short plans while still replanning when the current prefix becomes unfavorable.

\textbf{\texttt{cube-double/triple/quadruple}.}
These domains require a robot arm to move two, three, or four cubes to target locations.
The reward depends on the number of cubes that remain incorrectly placed, and an episode terminates when all cubes are correctly placed.
The \texttt{cube-triple} and \texttt{cube-quadruple} domains are especially challenging in the offline-to-online setting because solving them often requires efficient online exploration after offline pretraining.

\begin{table}[h]
    \centering
    \begin{tabular}{@{}lccc@{}}
        \toprule
        \textbf{Domain} & \textbf{Dataset Size} & \textbf{Episode Length} & \textbf{Action Dim.} \\
        \midrule
        \texttt{scene-sparse-*} & $1$M & $750$ & $5$ \\
        \texttt{puzzle-3x3-sparse-*} & $1$M & $500$ & $5$ \\
        \texttt{cube-double-*} & $1$M & $500$ & $5$ \\
        \texttt{cube-triple-*} & $3$M & $1000$ & $5$ \\
        \texttt{cube-quadruple-100M-*} & $100$M & $1000$ & $5$ \\
        \bottomrule
    \end{tabular}
    \vspace{2mm}
    \caption{\footnotesize \textbf{Domain metadata.}
    Dataset size is measured in transitions.
    All OGBench manipulation domains use a five-dimensional action space corresponding to end-effector translation, yaw, and gripper opening.}
    \label{tab:domain-metadata}
\end{table}

\subsection{Evaluation Protocol}
\label{app:evaluation-protocol}

\paragraph{Offline-to-online evaluation.}
The main offline-to-online RL results in Table~\ref{tab:multitask} and the per-task results in Table~\ref{tab:singletask} use this protocol.
Following~\citet{FQL, QC}, agents are pretrained for $1$M offline gradient steps and fine-tuned for $1$M online environment steps, with success rates reported at $1$M and $2$M steps.
All results are averaged over $4$ random seeds, with $95\%$ confidence intervals computed via $5000$-sample stratified bootstrap resampling~\citep{QC}.

\paragraph{Offline evaluation.}
The offline RL results in Table~\ref{tab:last-three-ogbench-manipulation} use this protocol.
Following~\citet{OGBench, FQL}, we train each method for $1$M offline gradient steps, evaluate every $100$K steps over $50$ episodes, and report the average success rate across the last three evaluation epochs ($800$K, $900$K, $1$M).

\paragraph{Results from prior works.}
For Table~\ref{tab:multitask}, the \texttt{IQL}, \texttt{ReBRAC}, \texttt{FQL}, \texttt{BFN}, \texttt{FQL-n}, \texttt{BFN-n}, \texttt{QC}, and \texttt{QC-FQL} entries are taken from \texttt{QC}'s released plot data\footnote{\url{https://github.com/ColinQiyangLi/qc/tree/main/plot_data}}.
For Table~\ref{tab:last-three-ogbench-manipulation}, the \texttt{FQL}, \texttt{FQL-n}, \texttt{QC-FQL}, \texttt{DEAS}, \texttt{DQC}, and \texttt{CGQ} entries are taken from~\citet{CGQ}.
Otherwise, we implement the baselines in our codebase and evaluate them under our protocol.

\section{Additional Experimental Results}
\label{app:additional-results}

\subsection{Computational Costs}
\label{app:computational-costs}

We run all experiments on NVIDIA RTX 3090 GPUs.
Table~\ref{tab:param-count} reports parameter counts and per-step runtimes on \texttt{cube-triple-task1}.

\paragraph{Parameter count.}
\texttt{DEAS} is counted under its default \texttt{cube-triple} reproduction configuration~\citep{DEAS}.
ACSAC has the smallest parameter count because its shallow Transformer critic is shared across all prefix lengths rather than instantiating a separate critic per chunk size.

\paragraph{Per-step runtime.}
\texttt{QC-FQL} has comparable runtime to \texttt{FQL} and \texttt{BFN} for both offline and online training.
\texttt{QC} is slower offline because it samples $32$ actions per training example, while \texttt{BFN} samples $4$.
ACSAC shares \texttt{QC}'s expected-max sampling backbone and adds only a shallow ($n_{\mathrm{layer}}{=}2$, $n_{\mathrm{embd}}{=}128$) Transformer, so its offline runtime is on par with \texttt{QC}.
\texttt{DEAS} skips rejection sampling and is therefore substantially faster offline.
For online training, \texttt{QC} and ACSAC are only $30$--$50\%$ more expensive than \texttt{FQL}, \texttt{BFN}, and \texttt{QC-FQL}.

\begin{table}[h]
    \centering
    \begin{tabular}{@{}lccc@{}}
        \toprule
        \textbf{Method} & \textbf{Parameter Count} & \textbf{Offline Run-time (ms)} & \textbf{Online Run-time (ms)} \\
        \midrule
        \texttt{FQL}    & $4{,}911{,}630$ & $2.84$  & $10.26$ \\
        \texttt{BFN}    & $4{,}094{,}473$ & $3.19$  & $11.82$ \\
        \texttt{QC}     & $4{,}155{,}933$ & $11.50$ & $15.37$ \\
        \texttt{QC-FQL} & $4{,}993{,}590$ & $4.06$  & $11.55$ \\
        \texttt{DEAS}   & $5{,}231{,}137$ & $3.00$  & $5.42$  \\
        \texttt{ACSAC}  & $2{,}455{,}065$ & $12.50$ & $13.76$ \\
        \bottomrule
    \end{tabular}
    \vspace{2mm}
    \caption{\footnotesize \textbf{Parameter count and per-step runtime on \texttt{cube-triple-task1}.}
    Offline measures one agent training step.
    Online measures one agent training step plus one environment step.
    Parameter counts are measured on \texttt{cube-triple-task1} ($46$-dim observation, $5$-dim action).}
    \label{tab:param-count}
\end{table}

\subsection{Full Offline-to-Online RL Results for OGBench}
\label{app:full-o2o-results}

Table~\ref{tab:singletask} reports per-task offline-to-online RL results aligned with the domain-level summary in Table~\ref{tab:multitask}.
The ACSAC and \texttt{DEAS} columns are reproduced under our unified protocol.
Figure~\ref{fig:full-ogbench-curves} provides summary plots by domain (matching Table~\ref{tab:multitask}) followed by per-task curves (matching Table~\ref{tab:singletask}).

% Result-highlight helpers shared with fig/multitask.tex.
\providecommand{\otocicolored}[8]{\ensuremath{{\color{#1}\cibval{#2}{#3}{#4}}\!\rightarrow\!{\color{#5}\cibval{#6}{#7}{#8}}}}
\providecommand{\sotooff}[6]{\otocicolored{ourblue}{#1}{#2}{#3}{gray!60}{#4}{#5}{#6}}
\providecommand{\sotoon}[6]{\otocicolored{gray!60}{#1}{#2}{#3}{ourblue}{#4}{#5}{#6}}
\providecommand{\motooff}[6]{\otocicolored{ourmiddle}{#1}{#2}{#3}{gray!60}{#4}{#5}{#6}}
\providecommand{\motoon}[6]{\otocicolored{gray!60}{#1}{#2}{#3}{ourmiddle}{#4}{#5}{#6}}
\providecommand{\fotooff}[6]{\otocicolored{ourpurple}{#1}{#2}{#3}{gray!60}{#4}{#5}{#6}}
\providecommand{\fotoon}[6]{\otocicolored{gray!60}{#1}{#2}{#3}{ourpurple}{#4}{#5}{#6}}
\providecommand{\aotooff}[6]{\otocicolored{ourgreen}{#1}{#2}{#3}{gray!60}{#4}{#5}{#6}}
\providecommand{\aotoon}[6]{\otocicolored{gray!60}{#1}{#2}{#3}{ourgreen}{#4}{#5}{#6}}
\providecommand{\aotogray}[6]{{\color{gray!60}\otocib{#1}{#2}{#3}{#4}{#5}{#6}}}

\begin{table}[t]
\centering
\renewcommand{\arraystretch}{1.4}
\setlength{\extrarowheight}{2pt}
\makebox[\textwidth]{\scalebox{0.55}{%
\begin{tabular}{cl>{\columncolor{ourlightblue}}c>{\columncolor{ourlightblue}}c>{\columncolor{ourlightblue}}c>{\columncolor{ourlightblue}}c>{\columncolor{ourlightmiddle}}c>{\columncolor{ourlightmiddle}}c>{\columncolor{ourlightpurple}}c>{\columncolor{ourlightpurple}}c>{\columncolor{ourlightpurple}}c>{\columncolor{ourlightgreen}}c}
\toprule
 &  & \texttt{IQL} & \texttt{ReBRAC} & \texttt{FQL} & \texttt{BFN} & \texttt{FQL-n} & \texttt{BFN-n} & \texttt{QC-FQL} & \texttt{QC} & \texttt{DEAS} & \texttt{\textbf{ACSAC}} \\
\midrule
\multirow{6}{*}{\texttt{puzzle-3x3-sparse}} & \texttt{task1} & \soto{0}{0}{0}{99}{97}{100} & \sotoon{96}{90}{100}{100}{100}{100} & \sotoon{99}{98}{100}{100}{100}{100} & \sotoon{98}{97}{100}{100}{100}{100} & \motob{100}{98}{100}{100}{100}{100} & \motoon{88}{86}{91}{100}{100}{100} & \fotob{100}{100}{100}{100}{100}{100} & \fotob{100}{100}{100}{100}{100}{100} & \fotob{100}{100}{100}{100}{100}{100} & \aoto{100}{100}{100}{100}{100}{100} \\
 & \texttt{task2} & \soto{0}{0}{0}{0}{0}{0} & \sotoon{70}{46}{86}{100}{100}{100} & \sotoon{99}{98}{100}{100}{100}{100} & \sotoon{98}{96}{100}{100}{100}{100} & \motob{100}{100}{100}{100}{100}{100} & \motoon{72}{66}{77}{100}{100}{100} & \fotoon{86}{60}{100}{100}{100}{100} & \fotob{100}{100}{100}{100}{100}{100} & \fotoon{98}{95}{100}{100}{100}{100} & \aoto{100}{100}{100}{100}{100}{100} \\
 & \texttt{task3} & \soto{0}{0}{0}{0}{0}{0} & \sotoon{27}{6}{56}{100}{100}{100} & \sotob{100}{98}{100}{100}{100}{100} & \sotoon{95}{89}{100}{100}{100}{100} & \motob{100}{98}{100}{100}{100}{100} & \moto{56}{55}{57}{82}{76}{88} & \fotoon{50}{0}{99}{100}{100}{100} & \fotob{100}{100}{100}{100}{100}{100} & \fotoon{88}{82}{94}{100}{100}{100} & \aoto{100}{100}{100}{100}{100}{100} \\
 & \texttt{task4} & \soto{0}{0}{0}{0}{0}{0} & \sotoon{29}{14}{44}{100}{100}{100} & \sotob{100}{98}{100}{100}{100}{100} & \sotoon{98}{98}{100}{100}{100}{100} & \motob{100}{98}{100}{100}{100}{100} & \moto{37}{30}{48}{66}{59}{76} & \fotoon{75}{25}{100}{100}{100}{100} & \fotob{100}{100}{100}{100}{100}{100} & \fotoon{95}{94}{97}{100}{100}{100} & \aoto{100}{100}{100}{100}{100}{100} \\
 & \texttt{task5} & \soto{0}{0}{0}{0}{0}{0} & \sotoon{54}{29}{74}{100}{100}{100} & \sotob{100}{100}{100}{100}{100}{100} & \sotoon{97}{95}{100}{100}{100}{100} & \motoon{91}{79}{100}{100}{100}{100} & \motoon{47}{38}{55}{100}{98}{100} & \fotoon{4}{0}{11}{100}{100}{100} & \fotob{100}{98}{100}{100}{100}{100} & \fotoon{94}{90}{98}{100}{100}{100} & \aoto{100}{100}{100}{100}{100}{100} \\
 & \texttt{avg. (5 tasks)} & \soto{0}{0}{0}{20}{19}{20} & \sotoon{55}{47}{65}{100}{100}{100} & \sotob{100}{99}{100}{100}{100}{100} & \sotoon{98}{96}{99}{100}{100}{100} & \motoon{98}{95}{100}{100}{100}{100} & \moto{58}{55}{61}{90}{89}{92} & \fotoon{63}{48}{76}{100}{100}{100} & \fotob{100}{99}{100}{100}{100}{100} & \fotoon{95}{92}{98}{100}{100}{100} & \aoto{100}{100}{100}{100}{100}{100} \\
\midrule
\multirow{6}{*}{\texttt{scene-sparse}} & \texttt{task1} & \soto{1}{0}{2}{97}{96}{99} & \sotoon{4}{0}{9}{100}{100}{100} & \sotoon{69}{55}{79}{100}{100}{100} & \sotoon{99}{97}{100}{100}{100}{100} & \motoon{22}{19}{25}{100}{100}{100} & \motoon{64}{52}{74}{100}{100}{100} & \fotoon{99}{98}{100}{100}{100}{100} & \fotob{100}{100}{100}{100}{100}{100} & \fotoon{98}{98}{100}{100}{100}{100} & \aoto{100}{100}{100}{100}{100}{100} \\
 & \texttt{task2} & \soto{0}{0}{0}{97}{94}{100} & \sotoon{51}{30}{78}{100}{100}{100} & \soto{51}{41}{64}{99}{98}{100} & \soto{99}{97}{100}{99}{98}{100} & \moto{4}{2}{6}{42}{33}{51} & \moto{15}{7}{23}{98}{98}{100} & \fotoon{88}{82}{94}{100}{100}{100} & \fotoon{98}{96}{100}{100}{100}{100} & \fotoon{98}{97}{100}{100}{100}{100} & \aoto{100}{100}{100}{100}{98}{100} \\
 & \texttt{task3} & \soto{0}{0}{0}{0}{0}{0} & \soto{0}{0}{2}{99}{98}{100} & \sotoon{68}{48}{83}{100}{100}{100} & \sotoon{93}{90}{96}{100}{100}{100} & \moto{1}{0}{2}{24}{9}{40} & \moto{34}{21}{47}{97}{95}{99} & \fotoon{98}{96}{100}{100}{100}{100} & \fotoon{90}{83}{96}{100}{100}{100} & \foto{82}{74}{89}{96}{91}{100} & \aoto{100}{98}{100}{100}{100}{100} \\
 & \texttt{task4} & \soto{0}{0}{0}{0}{0}{0} & \soto{0}{0}{0}{98}{98}{100} & \soto{75}{69}{81}{96}{88}{100} & \soto{86}{71}{99}{97}{96}{99} & \moto{50}{41}{63}{94}{86}{99} & \moto{92}{85}{96}{94}{85}{100} & \foto{92}{85}{98}{97}{95}{100} & \foto{93}{87}{96}{99}{98}{100} & \fotooff{100}{98}{100}{89}{84}{96} & \aoto{100}{98}{100}{100}{100}{100} \\
 & \texttt{task5} & \soto{0}{0}{0}{0}{0}{0} & \soto{0}{0}{0}{97}{95}{98} & \soto{25}{18}{32}{81}{73}{88} & \soto{38}{14}{52}{99}{98}{100} & \moto{14}{12}{17}{91}{78}{98} & \moto{71}{56}{82}{98}{97}{100} & \foto{41}{34}{50}{99}{98}{100} & \foto{43}{14}{70}{95}{91}{100} & \foto{87}{82}{91}{89}{80}{97} & \aoto{97}{96}{98}{100}{98}{100} \\
 & \texttt{avg. (5 tasks)} & \soto{0}{0}{0}{39}{39}{39} & \soto{11}{7}{16}{99}{99}{99} & \soto{57}{55}{60}{95}{93}{98} & \soto{85}{82}{87}{99}{99}{100} & \moto{18}{17}{21}{70}{64}{76} & \moto{57}{50}{64}{97}{95}{99} & \foto{84}{81}{87}{99}{99}{100} & \foto{84}{79}{89}{99}{98}{100} & \foto{93}{91}{95}{95}{91}{98} & \aoto{99}{99}{100}{100}{100}{100} \\
\midrule
\multirow{6}{*}{\texttt{cube-double}} & \texttt{task1} & \soto{1}{0}{2}{0}{0}{0} & \soto{12}{8}{16}{99}{98}{100} & \soto{63}{50}{76}{99}{97}{100} & \soto{70}{55}{86}{97}{94}{100} & \moto{33}{22}{43}{99}{97}{100} & \motoon{23}{22}{24}{100}{98}{100} & \fotoon{66}{52}{77}{100}{100}{100} & \foto{84}{79}{90}{99}{98}{100} & \foto{92}{89}{96}{98}{96}{100} & \aoto{99}{97}{100}{100}{100}{100} \\
 & \texttt{task2} & \soto{0}{0}{0}{0}{0}{0} & \soto{0}{0}{0}{16}{4}{28} & \soto{36}{24}{45}{92}{90}{94} & \soto{79}{70}{85}{88}{85}{92} & \moto{8}{4}{10}{96}{94}{97} & \moto{8}{6}{9}{65}{56}{75} & \fotoon{43}{31}{61}{100}{100}{100} & \fotoon{79}{73}{84}{100}{100}{100} & \foto{71}{60}{82}{92}{88}{96} & \aoto{92}{89}{96}{100}{98}{100} \\
 & \texttt{task3} & \soto{0}{0}{0}{0}{0}{0} & \soto{0}{0}{0}{34}{14}{54} & \soto{22}{10}{34}{96}{94}{99} & \soto{83}{82}{85}{87}{84}{91} & \moto{1}{0}{3}{92}{88}{96} & \moto{7}{4}{10}{73}{60}{84} & \fotoon{38}{22}{52}{99}{98}{100} & \fotoon{68}{56}{80}{99}{98}{100} & \foto{54}{41}{68}{96}{94}{98} & \aoto{91}{87}{96}{99}{98}{100} \\
 & \texttt{task4} & \soto{0}{0}{0}{0}{0}{0} & \soto{0}{0}{0}{0}{0}{0} & \soto{9}{2}{15}{1}{0}{2} & \soto{22}{14}{29}{38}{28}{45} & \moto{0}{0}{0}{2}{0}{4} & \moto{1}{0}{4}{0}{0}{0} & \fotoon{11}{8}{15}{100}{100}{100} & \foto{22}{12}{35}{92}{85}{99} & \foto{8}{3}{14}{46}{43}{49} & \aoto{45}{44}{48}{100}{98}{100} \\
 & \texttt{task5} & \soto{0}{0}{0}{0}{0}{0} & \soto{1}{0}{2}{1}{0}{2} & \soto{12}{7}{17}{92}{90}{94} & \soto{82}{78}{87}{96}{94}{98} & \moto{13}{5}{20}{96}{93}{99} & \moto{13}{12}{15}{78}{65}{88} & \fotoon{37}{32}{46}{99}{98}{100} & \fotoon{74}{68}{80}{99}{97}{100} & \foto{57}{53}{63}{92}{86}{96} & \aoto{91}{89}{94}{99}{97}{100} \\
 & \texttt{avg. (5 tasks)} & \soto{0}{0}{0}{0}{0}{0} & \soto{3}{2}{3}{30}{28}{32} & \soto{29}{23}{34}{76}{75}{76} & \soto{68}{63}{73}{80}{78}{81} & \moto{11}{7}{13}{77}{76}{77} & \moto{11}{10}{13}{65}{62}{69} & \fotoon{39}{32}{47}{100}{100}{100} & \foto{67}{64}{71}{98}{96}{99} & \foto{56}{50}{64}{85}{83}{86} & \aoto{84}{82}{87}{100}{99}{100} \\
\midrule
\multirow{6}{*}{\texttt{cube-triple}} & \texttt{task1} & \soto{0}{0}{0}{1}{0}{2} & \soto{1}{0}{2}{1}{0}{2} & \soto{5}{3}{8}{87}{78}{96} & \soto{17}{11}{23}{97}{95}{98} & \moto{0}{0}{0}{3}{0}{6} & \moto{2}{0}{4}{1}{0}{3} & \fotoon{19}{13}{26}{100}{100}{100} & \fotoon{13}{8}{19}{100}{100}{100} & \fotoon{22}{17}{29}{100}{100}{100} & \aoto{60}{45}{78}{100}{100}{100} \\
 & \texttt{task2} & \soto{0}{0}{0}{0}{0}{0} & \soto{0}{0}{0}{0}{0}{0} & \soto{1}{0}{2}{0}{0}{0} & \soto{1}{0}{2}{8}{0}{26} & \moto{0}{0}{0}{0}{0}{0} & \moto{0}{0}{0}{0}{0}{0} & \foto{1}{0}{2}{89}{84}{93} & \foto{1}{0}{2}{89}{82}{94} & \foto{0}{0}{0}{68}{65}{72} & \aoto{4}{1}{8}{99}{98}{100} \\
 & \texttt{task3} & \soto{0}{0}{0}{0}{0}{0} & \soto{0}{0}{0}{0}{0}{0} & \soto{0}{0}{2}{5}{0}{12} & \soto{1}{0}{3}{11}{5}{18} & \moto{0}{0}{0}{0}{0}{0} & \moto{0}{0}{0}{0}{0}{0} & \foto{0}{0}{0}{51}{29}{70} & \foto{7}{1}{14}{73}{70}{78} & \foto{1}{0}{2}{68}{64}{72} & \aoto{20}{15}{26}{99}{98}{100} \\
 & \texttt{task4} & \soto{0}{0}{0}{0}{0}{0} & \soto{0}{0}{0}{0}{0}{0} & \soto{0}{0}{0}{0}{0}{0} & \soto{0}{0}{0}{1}{0}{2} & \moto{0}{0}{0}{0}{0}{0} & \moto{0}{0}{0}{0}{0}{0} & \foto{0}{0}{0}{26}{10}{52} & \foto{0}{0}{0}{54}{40}{65} & \foto{0}{0}{0}{16}{9}{24} & \aoto{6}{1}{12}{98}{97}{100} \\
 & \texttt{task5} & \soto{0}{0}{0}{0}{0}{0} & \soto{0}{0}{0}{0}{0}{0} & \soto{0}{0}{0}{0}{0}{0} & \soto{0}{0}{0}{0}{0}{0} & \moto{0}{0}{0}{0}{0}{0} & \moto{0}{0}{0}{0}{0}{0} & \foto{0}{0}{0}{0}{0}{0} & \foto{0}{0}{0}{0}{0}{0} & \fotoon{0}{0}{0}{94}{88}{98} & \aotooff{2}{1}{4}{90}{80}{98} \\
 & \texttt{avg. (5 tasks)} & \soto{0}{0}{0}{0}{0}{0} & \soto{0}{0}{0}{0}{0}{0} & \soto{1}{1}{2}{18}{16}{20} & \soto{4}{3}{5}{23}{21}{27} & \moto{0}{0}{0}{1}{0}{1} & \moto{0}{0}{1}{0}{0}{1} & \foto{4}{3}{5}{53}{46}{61} & \foto{6}{3}{10}{64}{61}{66} & \foto{4}{4}{6}{69}{66}{73} & \aoto{19}{14}{25}{97}{95}{99} \\
\midrule
\multirow{6}{*}{\texttt{cube-quadruple}} & \texttt{task1} & \soto{0}{0}{0}{0}{0}{0} & \soto{7}{2}{10}{98}{97}{100} & \soto{1}{0}{2}{14}{2}{31} & \soto{7}{5}{10}{56}{52}{64} & \moto{28}{23}{33}{97}{95}{99} & \moto{0}{0}{0}{0}{0}{0} & \foto{6}{3}{10}{98}{96}{100} & \foto{16}{6}{27}{99}{98}{100} & \fotooff{57}{51}{63}{81}{71}{91} & \aotoon{20}{5}{42}{100}{98}{100} \\
 & \texttt{task2} & \soto{0}{0}{0}{0}{0}{0} & \soto{0}{0}{0}{0}{0}{0} & \soto{0}{0}{0}{0}{0}{0} & \soto{0}{0}{0}{3}{0}{8} & \moto{2}{0}{3}{49}{39}{59} & \moto{0}{0}{0}{0}{0}{0} & \foto{0}{0}{0}{97}{96}{99} & \foto{0}{0}{0}{98}{98}{100} & \fotooff{20}{12}{26}{18}{15}{20} & \aotoon{0}{0}{2}{99}{98}{100} \\
 & \texttt{task3} & \soto{0}{0}{0}{0}{0}{0} & \soto{0}{0}{0}{0}{0}{2} & \soto{0}{0}{0}{0}{0}{0} & \soto{0}{0}{0}{0}{0}{2} & \motooff{5}{4}{7}{4}{2}{6} & \moto{0}{0}{0}{0}{0}{0} & \fotoon{0}{0}{0}{94}{91}{96} & \foto{3}{0}{8}{78}{74}{82} & \foto{2}{2}{4}{2}{0}{5} & \aotogray{4}{1}{8}{69}{62}{76} \\
 & \texttt{task4} & \soto{0}{0}{0}{0}{0}{0} & \soto{0}{0}{0}{0}{0}{0} & \soto{0}{0}{0}{0}{0}{0} & \soto{0}{0}{0}{0}{0}{0} & \motooff{1}{0}{3}{32}{18}{46} & \moto{0}{0}{0}{0}{0}{0} & \foto{0}{0}{0}{94}{91}{97} & \fotooff{1}{0}{2}{88}{82}{95} & \foto{0}{0}{0}{2}{0}{3} & \aotoon{0}{0}{0}{96}{94}{96} \\
 & \texttt{task5} & \soto{0}{0}{0}{0}{0}{0} & \soto{0}{0}{0}{0}{0}{0} & \soto{0}{0}{0}{0}{0}{0} & \soto{0}{0}{0}{0}{0}{0} & \moto{0}{0}{0}{0}{0}{0} & \moto{0}{0}{0}{0}{0}{0} & \foto{0}{0}{0}{0}{0}{0} & \foto{0}{0}{0}{0}{0}{0} & \fotob{13}{9}{19}{9}{6}{14} & \aotogray{0}{0}{0}{0}{0}{0} \\
 & \texttt{avg. (5 tasks)} & \soto{0}{0}{0}{0}{0}{0} & \soto{1}{0}{2}{20}{20}{20} & \soto{0}{0}{0}{3}{0}{6} & \soto{1}{1}{2}{12}{12}{13} & \moto{7}{6}{8}{36}{36}{37} & \moto{0}{0}{0}{0}{0}{0} & \fotoon{1}{1}{2}{77}{76}{77} & \foto{4}{2}{7}{74}{72}{75} & \fotooff{18}{16}{21}{22}{20}{25} & \aotogray{5}{1}{10}{73}{71}{74} \\
\midrule
\multirow{1}{*}{\texttt{overall}} & \texttt{avg. (25 tasks)} & \soto{0}{0}{0}{12}{12}{12} & \soto{14}{13}{15}{50}{49}{50} & \soto{37}{36}{38}{58}{57}{60} & \soto{51}{50}{52}{63}{62}{63} & \moto{27}{25}{28}{57}{56}{58} & \moto{25}{24}{26}{50}{48}{52} & \foto{38}{35}{41}{86}{84}{88} & \foto{52}{50}{53}{86}{86}{87} & \foto{53}{50}{56}{74}{72}{77} & \aoto{61}{59}{64}{94}{93}{94} \\
\bottomrule
\end{tabular}
}}
\vspace{1mm}
\caption{\footnotesize \textbf{Complete OGBench offline-to-online RL results by task.} For each cell, we report the offline performance after $1$M of training steps and then the online performance after $1$M of additional online steps. The best method(s) for each column is highlighted in bold and color. All values are means over $4$ seeds with $95\%$ confidence intervals.}
\label{tab:singletask}
\end{table}

\begin{figure}[ht]
    \centering
    \includegraphics[width=\textwidth]{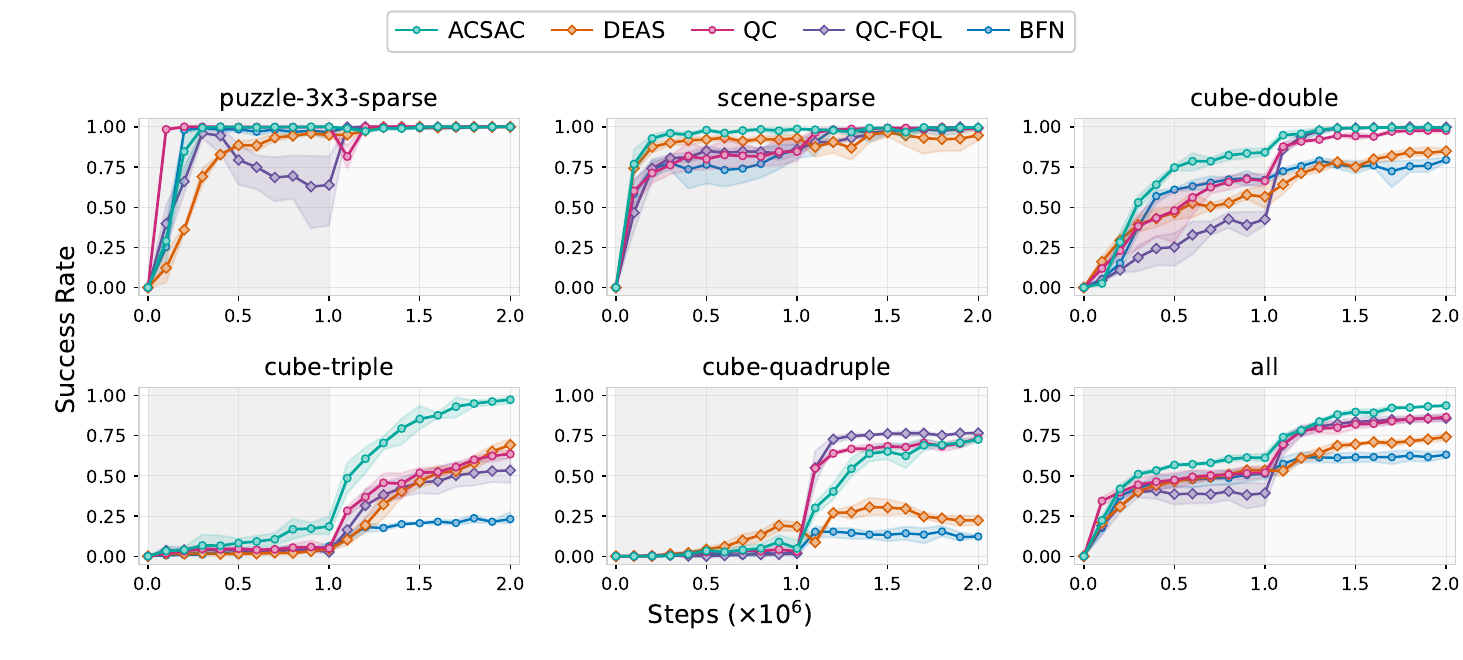}
    \includegraphics[width=\textwidth]{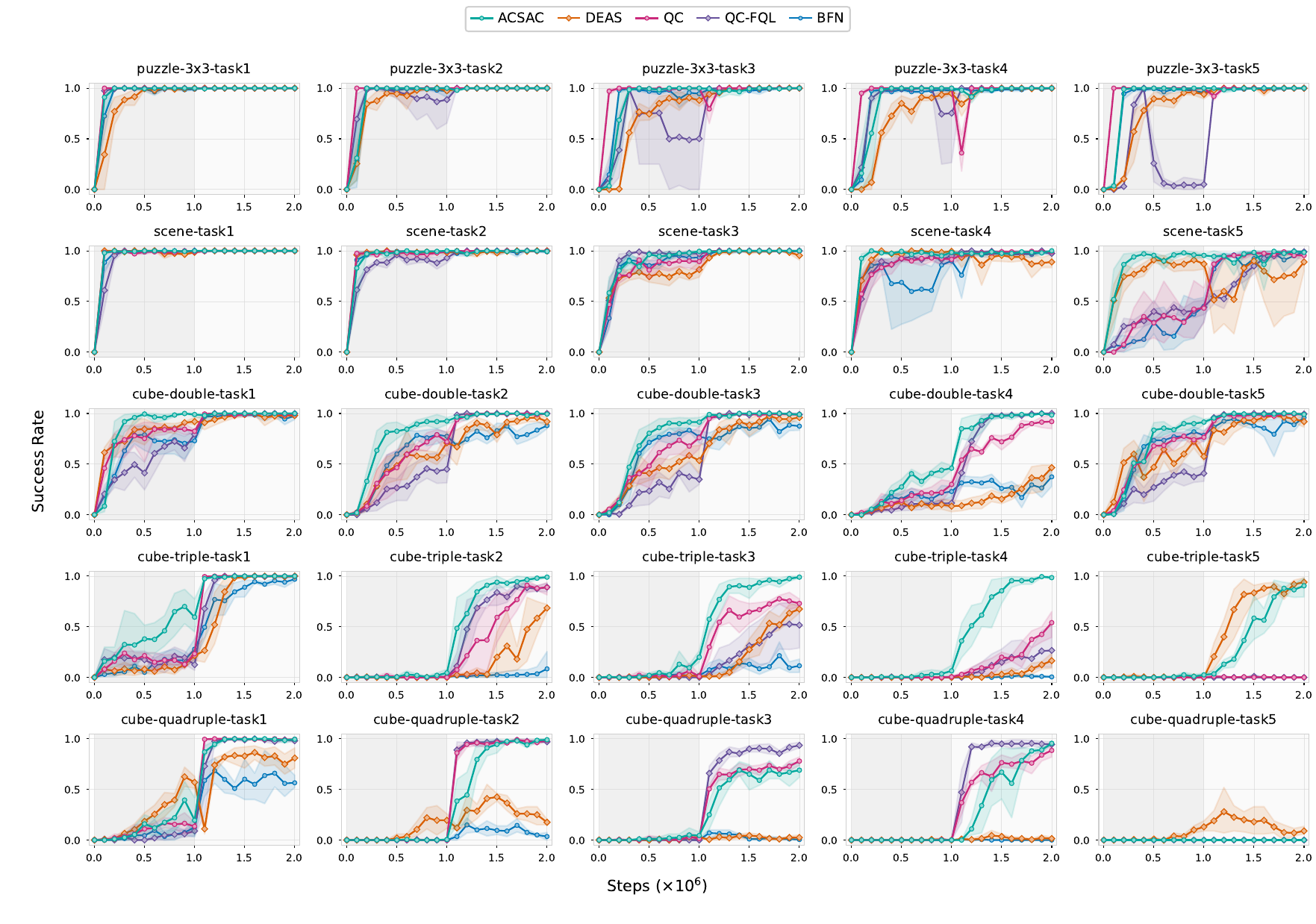}
    \caption{\footnotesize \textbf{Complete OGBench offline-to-online RL results by task.}
    Following~\citet{QC}'s appendix convention, the figure first shows summary plots by domain (corresponding to Table~\ref{tab:multitask}) and then per-task curves (corresponding to Table~\ref{tab:singletask}).
    The first $1$M steps correspond to offline training and the next $1$M steps correspond to online fine-tuning.
    Results are averaged over $4$ seeds and plotted with $95\%$ confidence intervals.}
    \label{fig:full-ogbench-curves}
\end{figure}
\clearpage

\subsection{Offline RL Results for OGBench}
\label{app:offline-rl-results}

Table~\ref{tab:last-three-ogbench-manipulation} reports an offline-only comparison on OGBench manipulation, complementary to the offline-to-online results in Table~\ref{tab:multitask}.
For tasks already evaluated in prior work, we use the reported numbers directly (entries without $\pm$).
The \texttt{FQL}, \texttt{FQL-n}, \texttt{QC-FQL}, \texttt{DEAS}, \texttt{DQC}, and \texttt{CGQ} entries are taken from~\citet{CGQ}.
The evaluation protocol is described in Appendix~\ref{app:evaluation-protocol}.

\begin{table}[h]
\centering
\small
\resizebox{\textwidth}{!}{
\begin{tabular}{lccccccc}
\toprule
\texttt{Task Category} & \texttt{FQL} & \texttt{FQL-n} & \texttt{QC-FQL} & \texttt{DEAS} & \texttt{DQC} & \texttt{CGQ} & \textbf{\texttt{ACSAC}} \\
\midrule
\texttt{scene-sparse (5 tasks)}      & $57$           & $18$          & $84$ & $93$          & $81$           & $86$          & $\mathbf{98\pm1}$  \\
\texttt{cube-double (5 tasks)}       & $29$           & $11$          & $39$ & $48$          & $31$           & $69$          & $\mathbf{81\pm4}$  \\
\texttt{puzzle-3x3-sparse (5 tasks)} & $\mathbf{100}$ & $\mathbf{98}$ & $63$ & $\mathbf{99}$ & $\mathbf{100}$ & $\mathbf{99}$ & $\mathbf{100\pm0}$ \\
\texttt{puzzle-4x4 (5 tasks)}        & $17$           & $23$          & $26$ & $\mathbf{39}$ & $8$            & $28$          & $33\pm2$           \\
\midrule
\texttt{Average (Manipulation)}      & $51$           & $38$          & $53$ & $70$          & $55$           & $71$          & $\mathbf{78\pm2}$  \\
\bottomrule
\end{tabular}
}
\vspace{1mm}
\caption{\footnotesize \textbf{Offline RL results on OGBench manipulation benchmarks.}
Following~\citet{CGQ}, we highlight results within $95\%$ of the best performance in bold.}
\label{tab:last-three-ogbench-manipulation}
\end{table}

\section{Theoretical Foundations}
\label{app:theory}

We analyze ACSAC as behavior-constrained variable-duration Bellman learning.
A length-$H$ chunk drawn from the flow BC policy is treated as a proposal path.
At each replanning state, ACSAC may execute any prefix of this path and replan afterwards.
We establish four results.
First, prefix truncation is deterministic post-processing, so it cannot increase the action-level mismatch between the flow BC policy and the behavior chunk distribution at the corresponding prefix length.
Second, prefix-conditioned Q-values of different lengths are well-defined under prefix consistency, and they share the unit of total discounted return at the unrestricted variable-length Bellman optimum.
Third, the critic loss in Equation~\ref{eq:critic_loss} approximates a $\gamma$-contractive Bellman backup whose unique fixed point is the action-value function of the deployed policy $\pi_\star$.
Fourth, averaging per-horizon squared losses at the gradient level reduces update variance under the multi-step targets, while preserving the sparse reward signal in each per-horizon target.

Throughout this appendix we work in the deterministic setting and at the level of exact expectations.
A short final paragraph in Section~\ref{app:theory-multi-horizon} sketches what changes in stochastic environments and under finite-sample approximation, and points to DQC and EMaQ for the heavier analysis.

\subsection{Setup and Notation}
\label{app:theory-setup}

\begin{assumption}[Deterministic MDP for analysis]
\label{ass:det_mdp}
For the proofs we consider the deterministic version of the MDP in Section~\ref{sec:preliminaries}, with $s_{t+1} = f(s_t, a_t)$ and bounded rewards $|r(s, a)| \le R_{\max}$, $\gamma \in (0, 1)$.
We use the sup-norm $\|Q\|_\infty := \sup_{s, a_{t:t+h}} |Q(s, a_{t:t+h})|$ on bounded $Q$-functions, and write the proofs for finite or discretized state-action spaces.
The same contraction arguments extend to bounded continuous spaces when the displayed maxima exist;
otherwise, maxima can be replaced by suprema.
The OGBench tasks evaluated in Section~\ref{sec:experiments} are deterministic.
\end{assumption}

\paragraph{Variable-length action chunks.}
We use the body's chunk notation $a_{t:t+h} := (a_t, a_{t+1}, \ldots, a_{t+h-1}) \in \mathcal{A}^h$ throughout this appendix, with $h \in [H] := \{1, \ldots, H\}$ and $[N] := \{1, \ldots, N\}$.
We write
\begin{equation}
\label{eq:Aleq}
\Aleq \;:=\; \bigcup_{h=1}^{H} \mathcal{A}^h
\end{equation}
for the set of executable prefixes, with the convention that an element carries its own length $h$.
Under Assumption~\ref{ass:det_mdp}, executing $a_{t:t+h}$ from $s_t$ produces the unique trajectory $s_{t+1} = f(s_t, a_t), \ldots, s_{t+h} = f(s_{t+h-1}, a_{t+h-1})$, with open-loop chunk return
\begin{equation}
\label{eq:chunk_return}
r_t^{(h)} \;:=\; \sum_{\tau=0}^{h-1}\gamma^\tau\, r(s_{t+\tau}, a_{t+\tau}),
\end{equation}
in agreement with the prefix reward used in the chunked TD loss of Equation~\ref{eq:chunked_td}.

\paragraph{Behavior and proposal distributions.}
Let $\pi_\beta^H(\cdot \mid s_t) \in \Delta(\mathcal{A}^H)$ denote the true behavior chunk distribution under the data-collection policy of $\mathcal{D}$, and let $\pi_\theta^H(\cdot \mid s_t) \in \Delta(\mathcal{A}^H)$ denote the full-length law of the flow BC policy of Section~\ref{sec:adaptive_extraction}.
For $h \in [H]$, $\pi_\theta^h(\cdot \mid s_t)$ denotes the distribution of the first $h$ actions when a full chunk is sampled from $\pi_\theta^H(\cdot \mid s_t)$,
\begin{equation}
\label{eq:prefix_marginal}
\pi_\theta^h(a_{t:t+h} \mid s_t) \;:=\; \int_{\mathcal{A}^{H-h}} \pi_\theta^H(a_{t:t+H} \mid s_t)\, d a_{t+h:t+H},
\end{equation}
and $\pi_\beta^h$ is defined analogously from $\pi_\beta^H$.
The integral becomes a sum in the discrete case.

\paragraph{Equal-value candidates.}
When several candidates attain the same maximum value, we choose one using a fixed state-independent rule, such as the first candidate in the stored order.
The conclusions below do not depend on how ties are resolved;
the rule only makes the policy extraction in Definition~\ref{def:induced_policy} below single-valued.

\subsection{Prefix Truncation Does Not Increase Behavior Mismatch}
\label{app:theory-truncation}

ACSAC's bootstrap target queries the critic at an adaptively-chosen prefix of a behavior-generated chunk.
A natural concern is whether shorter prefixes drift further from the behavior support than the full chunk does.
We rule this out at the level of total variation distance, using only the elementary fact that marginalization is non-expansive in TV.

\begin{lemma}[Marginalization is TV non-expansive]
\label{lem:tv_marginal}
For any two distributions $\mu, \nu$ on $\mathcal{A}^H$ and any $h \in [H]$, let $\mu_h, \nu_h$ denote their length-$h$ prefix marginals as in Equation~\ref{eq:prefix_marginal}.
Then
\begin{equation}
\label{eq:tv_marginal}
D_{\mathrm{TV}}(\mu_h, \nu_h) \;\le\; D_{\mathrm{TV}}(\mu, \nu).
\end{equation}
\end{lemma}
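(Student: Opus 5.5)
The plan is to use the variational (sup-over-events) characterization of total variation,
\[
D_{\mathrm{TV}}(\mu,\nu) \;=\; \sup_{E \subseteq \mathcal{A}^H \text{ measurable}} \bigl|\mu(E) - \nu(E)\bigr|,
\]
and observe that prefix marginalization is the pushforward under the deterministic projection $p_h : \mathcal{A}^H \to \mathcal{A}^h$, $p_h(a_{t:t+H}) = a_{t:t+h}$. First I will check that Equation~\ref{eq:prefix_marginal} is exactly $\mu_h = \mu \circ p_h^{-1}$. Integrating the density over the trailing coordinates $a_{t+h:t+H}$ gives the law of the first $h$ coordinates; this is Fubini/Tonelli for nonnegative densities, and a finite sum in the discrete setting of Assumption~\ref{ass:det_mdp}.

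Next, for any measurable $B \subseteq \mathcal{A}^h$, the cylinder set $E = p_h^{-1}(B) = B \times \mathcal{A}^{H-h}$ is measurable in $\mathcal{A}^H$, and
\[
|\mu_h(B) - \nu_h(B)| \;=\; |\mu(E) - \nu(E)| \;\le\; D_{\mathrm{TV}}(\mu,\nu).
\]
Taking the supremum over $B$ yields Equation~\ref{eq:tv_marginal}. The inequality holds because the events of the form $p_h^{-1}(B)$ are a subfamily of all events on $\mathcal{A}^H$, so the supremum over them cannot exceed the full supremum.

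As a cross-check, or as an alternative if the paper uses the $\tfrac12 L^1$ definition, I will write the densities as $\mu_h(x) = \int \mu(x,y)\,dy$, so that
\[
\tfrac12 \int |\mu_h(x) - \nu_h(x)|\,dx \;\le\; \tfrac12 \iint |\mu(x,y) - \nu(x,y)|\,dy\,dx
\]
by the triangle inequality for integrals. I will also note that the case $h = H$ is trivial.

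The only real obstacle is a bookkeeping one: the $L^1$ route presumes densities exist, and a flow policy's pushforward may not have them. I will therefore lead with the event-based argument, which needs no densities, and mention that it is the data-processing inequality for the deterministic channel $p_h$.
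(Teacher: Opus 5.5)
Your proof is correct, but your main route differs from the paper's.

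The paper uses the convention $D_{\mathrm{TV}}(p,q)=\tfrac12\sum_x|p(x)-q(x)|$ and argues in one line. The absolute value of a sum over the suffix $a_{t+h:t+H}$ is at most the sum of the absolute values, so $\sum_{a_{t:t+h}}\bigl|\sum_{a_{t+h:t+H}}(\mu-\nu)\bigr|\le\sum_{a_{t:t+H}}|\mu-\nu|$. The continuous case is declared analogous. This is exactly your ``cross-check.''

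Your primary argument uses the sup-over-events characterization instead. It observes that $\mu_h,\nu_h$ are pushforwards under the projection $p_h$, so events on $\mathcal{A}^h$ correspond to the subfamily of cylinder sets $B\times\mathcal{A}^{H-h}$ in $\mathcal{A}^H$. This buys you two things:
\begin{itemize}
\item It needs no densities. The paper only asserts the continuous case, and a pushforward through a flow map need not have a Lebesgue density.
\item It frames the lemma as the data-processing inequality for a deterministic channel. The paper later invokes the same principle for KL in Remark~\ref{rem:best_of_n_kl}.
\end{itemize}

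It costs one identification step. Because the paper's convention is the $\tfrac12 L^1$ form, you should state or cite that $\sup_E|\mu(E)-\nu(E)|=\tfrac12\|\mu-\nu\|_1$. This is standard and holds in general using densities with respect to $\mu+\nu$.

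In the finite or discretized setting of Assumption~\ref{ass:det_mdp}, which is where the paper works, the two routes are equally easy and the paper's computation is the more direct one.
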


\begin{proof}
Using the convention $D_{\mathrm{TV}}(p, q) = \tfrac{1}{2}\sum_x |p(x) - q(x)|$ and the triangle inequality,
$\sum_{a_{t:t+h}} \bigl|\sum_{a_{t+h:t+H}} (\mu - \nu)(a_{t:t+H})\bigr| \le \sum_{a_{t:t+H}} |\mu(a_{t:t+H}) - \nu(a_{t:t+H})|$.
Dividing by $2$ gives Equation~\ref{eq:tv_marginal}.
The continuous case is analogous, with sums replaced by integrals.
\end{proof}

\begin{theorem}[Prefix truncation does not increase behavior mismatch]
\label{thm:truncation_tv}
Let $\delta_\theta(s_t) := D_{\mathrm{TV}}(\pi_\theta^H(\cdot \mid s_t), \pi_\beta^H(\cdot \mid s_t))$ be the full-chunk behavior mismatch of the flow BC policy at state $s_t$.
Then for every $h \in [H]$,
\begin{equation}
\label{eq:tv_prefix_bound}
D_{\mathrm{TV}}\bigl(\pi_\theta^h(\cdot \mid s_t),\; \pi_\beta^h(\cdot \mid s_t)\bigr) \;\le\; \delta_\theta(s_t).
\end{equation}
\end{theorem}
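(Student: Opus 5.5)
The plan is to obtain the statement as a direct instance of Lemma~\ref{lem:tv_marginal}. Fix the state $s_t$ and set $\mu := \pi_\theta^H(\cdot \mid s_t)$ and $\nu := \pi_\beta^H(\cdot \mid s_t)$, both distributions on $\mathcal{A}^H$. By Equation~\ref{eq:prefix_marginal} and the sentence following it, $\pi_\theta^h(\cdot \mid s_t)$ and $\pi_\beta^h(\cdot \mid s_t)$ are exactly the length-$h$ prefix marginals $\mu_h$ and $\nu_h$ of these two laws. The marginalization map is the same in both cases: integrate out the coordinates $a_{t+h:t+H}$.

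Applying Lemma~\ref{lem:tv_marginal} for each $h \in [H]$ then gives
\[
D_{\mathrm{TV}}\bigl(\pi_\theta^h(\cdot \mid s_t), \pi_\beta^h(\cdot \mid s_t)\bigr) = D_{\mathrm{TV}}(\mu_h, \nu_h) \le D_{\mathrm{TV}}(\mu, \nu) = \delta_\theta(s_t).
\]
This is Equation~\ref{eq:tv_prefix_bound}. The case $h = H$ is trivial, since marginalization there is the identity and the inequality holds with equality.

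The only step needing care is confirming that the lemma's hypotheses hold uniformly in $s_t$, and that in the continuous setting the prefix marginals are well-defined measures, so that the triangle-inequality argument carries over with integrals. I would note that, because the lemma is stated for arbitrary pairs of distributions on $\mathcal{A}^H$, no regularity of $s_t \mapsto \pi_\theta^H(\cdot \mid s_t)$ is needed: the bound holds pointwise for each state.

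If a measure-theoretic version is wanted, I would use the variational form $D_{\mathrm{TV}}(\mu, \nu) = \sup_A |\mu(A) - \nu(A)|$. Every measurable set $B \subseteq \mathcal{A}^h$ satisfies $\mu_h(B) = \mu(B \times \mathcal{A}^{H-h})$, and likewise $\nu_h(B) = \nu(B \times \mathcal{A}^{H-h})$. So the supremum defining $D_{\mathrm{TV}}(\mu_h, \nu_h)$ ranges over a subfamily of the sets in the supremum for $D_{\mathrm{TV}}(\mu, \nu)$, which gives the inequality directly. I do not expect any real obstacle.
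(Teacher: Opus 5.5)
Your proposal is correct and matches the paper's proof, which is exactly the one-line application of Lemma~\ref{lem:tv_marginal} with $\mu = \pi_\theta^H(\cdot \mid s_t)$ and $\nu = \pi_\beta^H(\cdot \mid s_t)$. Your variational argument via cylinder sets $B \times \mathcal{A}^{H-h}$ is a valid measure-theoretic alternative to the paper's sum/integral triangle-inequality proof of that lemma. It avoids assuming densities in the continuous case.
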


\begin{proof}
Apply Lemma~\ref{lem:tv_marginal} with $\mu = \pi_\theta^H(\cdot \mid s_t)$ and $\nu = \pi_\beta^H(\cdot \mid s_t)$.
\end{proof}

\begin{remark}[Implication for the TD target]
\label{rem:trunc_td_target}
Theorem~\ref{thm:truncation_tv} concerns the proposal distribution before value-based selection.
Shortening a full behavior-generated chunk to any prefix does not increase its action-level mismatch from the corresponding behavior prefix marginal.
The set of prefixes used by the bootstrap target in Equation~\ref{eq:critic_loss} is therefore obtained from behavior-generated chunks without any length-dependent shift caused by truncation itself.
\end{remark}

\begin{remark}[Selection adds a separate, length-independent cost]
\label{rem:best_of_n_kl}
Value-based rejection sampling does change the selected action distribution.
This is the standard rejection-sampling cost already present in QC and EMaQ-style schemes:
QC's closed-form bound~\citep{QC} gives $D_{\mathrm{KL}}(\pi_{N\text{-best}}^H \,\|\, \pi_\theta^H) \le \log N - (N-1)/N$ for the rejection-sampled chunk, and the data-processing inequality propagates the same bound to any prefix length, so the selection cost is independent of which length is eventually executed.
ACSAC therefore separates the selection cost (controlled by $N$) from the prefix-truncation TV bound (Theorem~\ref{thm:truncation_tv});
we do not chain the two bounds, since KL divergence does not satisfy a triangle inequality.
This statement is action-level only and is not an OOD detector;
state-distribution shift under open-loop execution is a separate issue, briefly discussed at the end of Section~\ref{app:theory-multi-horizon}.
\end{remark}

\subsection{Prefix Values Are Well-Defined and Cross-Horizon Comparable}
\label{app:theory-comparability}

For ACSAC's joint argmax over $(n, h)$ to be meaningful, two conditions are needed.
The network output at the $h$-th position must represent a well-defined Q-value of the length-$h$ prefix, and the resulting Q-values across prefix lengths must lie on a single comparable scale.

\begin{definition}[Prefix consistency]
\label{def:prefix_consistency}
A network $\hat{Q}: \mathcal{S} \times \mathcal{A}^H \to \mathbb{R}^H$ is \emph{prefix-consistent} if its $h$-th output takes the same value on any two length-$H$ chunks that share the same first $h$ actions.
Equivalently, $\hat{Q}^{(h)}(s, a_{t:t+H})$ depends only on $(s, a_{t:t+h})$ and not on the suffix $a_{t+h:t+H}$.
\end{definition}

\begin{proposition}[Causal-masked Transformer is sufficient]
\label{prop:causal_sufficient}
A decoder-only Transformer with causal self-attention and one output head per position satisfies prefix consistency.
\end{proposition}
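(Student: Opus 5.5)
We prove the claim by induction over layers. The invariant is that the hidden representation at every position $i$ depends only on the tokens at positions $j \le i$. Fix the token ordering used in Appendix~\ref{app:impl-details}: position $0$ carries the state token $s_t$, and position $i \in [H]$ carries the action token $a_{t+i-1}$. The $h$-th output head reads the final-layer representation at position $h$. Once we show that this representation is a function of $(s_t, a_{t:t+h})$ alone, the output $\hat Q^{(h)}$ is a deterministic function of that representation and so depends only on $(s, a_{t:t+h})$. That is exactly Definition~\ref{def:prefix_consistency}.

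\emph{Base case.} The layer-$0$ embedding at position $i$ is $x^{(0)}_i = E(\text{token}_i) + p_i$, where $E$ is the token embedder and $p_i$ a positional encoding. It depends only on token $i$ and the fixed index $i$, so the invariant holds.

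\emph{Inductive step.} Suppose every $x^{(\ell)}_j$ depends only on the tokens at positions $0,\dots,j$. A pre-LayerNorm block computes $y_i = x_i + \mathrm{Attn}(\mathrm{LN}(x))_i$ and then $x^{(\ell+1)}_i = y_i + \mathrm{FFN}(\mathrm{LN}(y_i))$. LayerNorm and the FFN act position-wise, so they preserve the invariant, and so do the residual additions.

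\emph{Main obstacle: the attention sublayer.} Under the causal mask, the scores $q_i^\top k_j$ for $j>i$ are set to $-\infty$. Their softmax weights are therefore exactly $0$, and the normalization runs only over $j \le i$. Hence
\[
\mathrm{Attn}_i = \sum_{j\le i} \frac{\exp(q_i^\top k_j/\sqrt{d})}{\sum_{j'\le i}\exp(q_i^\top k_{j'}/\sqrt{d})}\, v_j .
\]
Each $q_i$, $k_j$, $v_j$ is a position-wise linear map of $\mathrm{LN}(x_j)$ with $j \le i$, so by the hypothesis $\mathrm{Attn}_i$ depends only on the tokens at positions $0,\dots,i$. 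Multiple heads are concatenated and then passed through a position-wise output projection, so the argument carries over unchanged. This closes the induction for every layer.

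It follows that two chunks sharing the first $h$ actions have identical inputs at positions $0,\dots,h$. Their final representations at position $h$ therefore coincide, and so do their $h$-th outputs. Two caveats need stating. First, we must assume there is no cross-position normalization, such as batch-statistics over the sequence or dropout that couples positions at evaluation time; pre-LN is per-token, so this holds. Second, the mask must be strictly causal, with no global or bidirectional tokens.
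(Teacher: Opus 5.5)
Your proof is correct and follows the same route as the paper. Both place the state at position $0$ and $a_{t+h-1}$ at position $h$, show by induction over layers that the causal mask confines the position-$h$ representation to positions $0,\ldots,h$, and conclude that the $h$-th head depends only on $(s, a_{t:t+h})$. You additionally spell out the per-sublayer details that the paper's brief proof leaves implicit: position-wise LayerNorm, FFN and residuals, the masked softmax, and the multi-head output projection.
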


\begin{proof}
Index the input tokens as $(s, a_t, \ldots, a_{t+H-1})$ with action $a_{t+h-1}$ at position $h$.
The causal mask restricts the hidden representation at position $h$ to attend only to positions $0, \ldots, h$, and induction over layers preserves this restriction.
The $h$-th output head reads only this hidden representation, so its value depends only on $(s, a_{t:t+h})$.
\end{proof}

\begin{remark}[Sufficient, not necessary]
\label{rem:sufficient}
A generic MLP that takes the entire chunk as input does not enforce prefix consistency by architecture.
It could in principle learn the suffix-invariance from data, but the property is not guaranteed.
ACSAC relies on the causal Transformer for this property by construction, not on data-driven invariance.
\end{remark}

\paragraph{Variable-horizon Bellman semantics.}
For any continuation policy $\pi$ and any state $s_t$, the variable-horizon prefix value is defined as
\begin{equation}
\label{eq:Q_pi_prefix}
Q^\pi(s_t, a_{t:t+h}) \;:=\; r_t^{(h)} \;+\; \gamma^h\, V^\pi(s_{t+h}),
\end{equation}
where $V^\pi$ is the standard state-value function of $\pi$ in the original one-step MDP.
This is the discounted return of committing to the prefix $a_{t:t+h}$ open-loop and then following $\pi$ from $s_{t+h}$.
All prefix lengths therefore share the same quantity type, namely the total discounted return from $s_t$, with prefix consistency ensuring that each output of a prefix-consistent network estimates this quantity for a unique length-$h$ prefix.

\begin{theorem}[Prefix Q-values are cross-horizon comparable]
\label{thm:cross_horizon}
For any continuation policy $\pi$, $Q^\pi$ in Equation~\ref{eq:Q_pi_prefix} is the action-value function of the variable-horizon Bellman backup
\begin{equation}
\label{eq:bpi}
(\mathcal{B}^\pi Q)(s_t, a_{t:t+h}) \;:=\; r_t^{(h)} + \gamma^h\, V^\pi(s_{t+h}),
\end{equation}
which is constant in $Q$ and therefore trivially has $Q^\pi$ as its unique fixed point.
For a full chunk $a_{t:t+H}$ and any $h_1 \le h_2 \in [H]$, writing the additional reward over the segment $[h_1, h_2)$ as
\begin{equation}
\label{eq:partial_return}
r_{t+h_1}^{(h_2 - h_1)} \;:=\; \sum_{j=0}^{h_2 - h_1 - 1} \gamma^j r(s_{t+h_1 + j}, a_{t+h_1 + j}),
\end{equation}
we obtain the difference identity
\begin{equation}
\label{eq:diff_formula}
\begin{aligned}
&Q^\pi(s_t, a_{t:t+h_2}) - Q^\pi(s_t, a_{t:t+h_1}) \\
&\quad =\; \gamma^{h_1}\!\Bigl(r_{t+h_1}^{(h_2 - h_1)} + \gamma^{h_2 - h_1}\, V^\pi(s_{t+h_2}) - V^\pi(s_{t+h_1})\Bigr).
\end{aligned}
\end{equation}
\end{theorem}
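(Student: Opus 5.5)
The statement has two parts, and we handle them in order. The first part is essentially definitional. The second is a short telescoping computation that uses only the deterministic dynamics of Assumption~\ref{ass:det_mdp}.

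\textbf{Fixed-point claim.} Fix a continuation policy $\pi$. Under Assumption~\ref{ass:det_mdp}, the prefix $a_{t:t+h}$ executed from $s_t$ determines a unique trajectory $s_{t+1},\dots,s_{t+h}$. Hence both $r_t^{(h)}$ and $V^\pi(s_{t+h})$ are fixed numbers depending only on $(s_t,a_{t:t+h})$. Boundedness of rewards gives $|V^\pi|\le R_{\max}/(1-\gamma)$, so the right-hand side of Equation~\ref{eq:bpi} is a bounded function, and $\mathcal{B}^\pi$ is well defined on bounded $Q$.

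Since the right-hand side does not involve $Q$, $\mathcal{B}^\pi$ is a constant map. A constant map $Q\mapsto C$ has exactly one fixed point, namely $C$. Comparing with Equation~\ref{eq:Q_pi_prefix} identifies $C=Q^\pi$. The map is also trivially a $0$-contraction in $\|\cdot\|_\infty$, so uniqueness holds in the Banach-space sense as well.

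We will also note why $Q^\pi$ deserves to be called the action-value function of the prefix. It is the return of committing open-loop to $a_{t:t+h}$ and then following $\pi$, because $V^\pi(s_{t+h})$ is the standard value function of the one-step MDP. This holds uniformly in $h$, so every prefix length measures the same quantity: total discounted return from $s_t$.

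\textbf{Difference identity.} Take a full chunk and $h_1\le h_2$. By determinism, the length-$h_1$ and length-$h_2$ prefixes induce the same states $s_{t+1},\dots,s_{t+h_1}$; both are read off the single trajectory generated by $a_{t:t+H}$. We split the longer prefix return as
\[
r_t^{(h_2)} = r_t^{(h_1)} + \gamma^{h_1} r_{t+h_1}^{(h_2-h_1)},
\]
by reindexing $\tau = h_1 + j$ in Equation~\ref{eq:chunk_return} and comparing with Equation~\ref{eq:partial_return}. Subtracting $Q^\pi(s_t,a_{t:t+h_1}) = r_t^{(h_1)} + \gamma^{h_1}V^\pi(s_{t+h_1})$ from $Q^\pi(s_t,a_{t:t+h_2}) = r_t^{(h_2)} + \gamma^{h_2}V^\pi(s_{t+h_2})$ cancels $r_t^{(h_1)}$. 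Factoring $\gamma^{h_1}$ out of the remaining terms, and writing $\gamma^{h_2} = \gamma^{h_1}\gamma^{h_2-h_1}$, yields Equation~\ref{eq:diff_formula}. The case $h_1=h_2$ is the empty sum and holds trivially.

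\textbf{Main obstacle.} The algebra itself is not the hard part. The point needing care is that the states $s_{t+h_1}$ and $s_{t+h_2}$ appearing in both $Q$-values are the same states along one trajectory. This is exactly where Assumption~\ref{ass:det_mdp} is used. In a stochastic MDP the identity would hold only in conditional expectation given $s_t$ and the chunk.

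Prefix consistency (Proposition~\ref{prop:causal_sufficient}) is what lets a network's $h$-th output be compared with $Q^\pi(s_t,a_{t:t+h})$ at all. It is not needed for the identity itself, so we will remark on it only afterwards.
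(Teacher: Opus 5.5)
Your proposal is correct and follows the paper's own argument. The fixed-point claim is immediate because $\mathcal{B}^\pi$ does not depend on $Q$, and the difference identity follows by splitting $r_t^{(h_2)} = r_t^{(h_1)} + \gamma^{h_1} r_{t+h_1}^{(h_2-h_1)}$, subtracting, and factoring out $\gamma^{h_1}$; your remarks on boundedness and on both prefixes sharing one deterministic trajectory only make explicit what the paper leaves implicit under Assumption~\ref{ass:det_mdp}.
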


\begin{proof}
Equation~\ref{eq:Q_pi_prefix} is the definition;
substitute the two cases $h = h_1, h_2$ and factor $\gamma^{h_1}$ to obtain Equation~\ref{eq:diff_formula} after splitting $r_t^{(h_2)} - r_t^{(h_1)} = \gamma^{h_1} r_{t+h_1}^{(h_2 - h_1)}$.
\end{proof}

\begin{remark}[Replanning-horizon interpretation]
\label{rem:comparability}
A longer prefix is preferred whenever the extra open-loop segment plus delayed continuation exceeds the value of replanning at $s_{t+h_1}$.
ACSAC's joint argmax over $(n, h)$ in Equation~\ref{eq:adaptive_policy} therefore compares replanning horizons on a common return scale.
Specializing $\pi$ to the optimal one-step policy $\pi^\star$ yields the corollary
\begin{equation}
\label{eq:q_star_decomp}
Q^{\pi^\star}(s_t, a_{t:t+h}) \;=\; r_t^{(h)} + \gamma^h\, V^\star(s_{t+h}),
\end{equation}
where $V^\star$ is the optimal value function of the original one-step MDP.
This identity also identifies $Q^{\pi^\star}$ with the fixed point of the unrestricted variable-horizon Bellman optimality operator $(\Bstar Q)(s_t, a_{t:t+h}) := r_t^{(h)} + \gamma^h \max_{a' \in \Aleq} Q(s_{t+h}, a')$.
\end{remark}

\subsection{Expected-Prefix-Max Bellman Backup}
\label{app:theory-bellman}

We now show that the per-horizon target inside the critic loss in Equation~\ref{eq:critic_loss} is an unbiased Monte Carlo sample of an expected-prefix-max Bellman backup $\Bnh$ that is a $\gamma$-contraction in sup-norm.
Its unique fixed point is the action-value function of the deployed adaptive-prefix policy $\pi_\star$.
The structure mirrors EMaQ~\citep{EMaQ}, which samples $N$ behavior actions and backs up their max;
ACSAC samples $N$ behavior chunks and backs up the max over all $NH$ executable prefixes.

\begin{definition}[ACSAC expected-prefix-max Bellman backup]
\label{def:bnh}
For bounded $Q : \mathcal{S} \times \Aleq \to \mathbb{R}$, the \emph{ACSAC Bellman backup} is
\begin{equation}
\label{eq:bnh}
\begin{aligned}
&(\Bnh Q)(s_t, a_{t:t+h}) \;:=\; r_t^{(h)} \\
&\quad +\; \gamma^h\, \mathbb{E}_{\tilde a^{(1:N)}_{t+h:t+h+H} \sim \pi_\theta^H(\cdot \mid s_{t+h})}\!\left[\,\max_{n \in [N],\, k \in [H]} Q\!\left(s_{t+h},\, \tilde a^{(n)}_{t+h:t+h+k}\right)\right].
\end{aligned}
\end{equation}
The tilde distinguishes the $N$ i.i.d.\ proposal chunks at the bootstrap state $s_{t+h}$ from the chunk $a_{t:t+H}$ on the regression side, and the maximum ranges over the $NH$ candidate prefixes formed from these proposals.
\end{definition}

\begin{lemma}[Max non-expansiveness over a common index set]
\label{lem:max_nonexpansive}
For any finite index set $\mathcal{I}$ and any real-valued $g_1, g_2: \mathcal{I} \to \mathbb{R}$,
\begin{equation}
\label{eq:max_nonexpansive}
\bigl|\max_{i \in \mathcal{I}} g_1(i) - \max_{i \in \mathcal{I}} g_2(i)\bigr| \;\le\; \max_{i \in \mathcal{I}} |g_1(i) - g_2(i)|.
\end{equation}
\end{lemma}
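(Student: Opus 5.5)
The argument is elementary and symmetric in $g_1$ and $g_2$, so it suffices to prove the one-sided bound
\[
\max_{i \in \mathcal{I}} g_1(i) - \max_{i \in \mathcal{I}} g_2(i) \le \max_{i \in \mathcal{I}} |g_1(i) - g_2(i)|
\]
and then swap the roles of $g_1$ and $g_2$. Since $\mathcal{I}$ is finite (and nonempty, which is implicit in the statement and holds in our application where $\mathcal{I} = [N] \times [H]$), both maxima are attained.

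The first step is to fix a maximizer $i_1 \in \arg\max_{i} g_1(i)$, which exists by finiteness. The second step uses the fact that $\max_i g_2(i) \ge g_2(i_1)$ because $i_1$ is a common index, which is exactly where the common index set is needed. This gives
\[
\max_i g_1(i) - \max_i g_2(i) \le g_1(i_1) - g_2(i_1) \le |g_1(i_1) - g_2(i_1)| \le \max_i |g_1(i) - g_2(i)|.
\]
The third step exchanges $g_1$ and $g_2$, picking a maximizer $i_2$ of $g_2$, to obtain the same bound for $\max_i g_2(i) - \max_i g_1(i)$. Combining the two one-sided inequalities yields the absolute-value bound in Equation~\ref{eq:max_nonexpansive}.

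There is no real obstacle here. The only point needing care is that both maxima are taken over the same index set $\mathcal{I}$, so that the maximizer of one function is a feasible index for the other. In the later application to $\Bnh$ this holds because, for fixed proposal samples $\tilde a^{(1:N)}$, both $Q_1$ and $Q_2$ are evaluated on the same $NH$ prefixes. As Assumption~\ref{ass:det_mdp} notes, the same argument carries over to infinite index sets with suprema in place of maxima, using an $\varepsilon$-maximizer and then letting $\varepsilon \to 0$.
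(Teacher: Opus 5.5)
Your proof is correct and is the same argument the paper gives. You fix a maximizer $i_1$ of $g_1$, use that it is also a feasible index for $g_2$ to get $\max_i g_1(i) - \max_i g_2(i) \le g_1(i_1) - g_2(i_1) \le \max_i |g_1(i) - g_2(i)|$, and conclude by symmetry; your remarks on nonemptiness of $\mathcal{I}$ and the $\varepsilon$-maximizer extension to suprema are small additions the paper leaves implicit.
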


\begin{proof}
Let $i_1 \in \arg\max_i g_1(i)$ and $i_2 \in \arg\max_i g_2(i)$.
Then $\max_i g_1(i) - \max_i g_2(i) \le g_1(i_1) - g_2(i_1) \le |g_1(i_1) - g_2(i_1)| \le \max_i |g_1(i) - g_2(i)|$, and the symmetric direction yields the absolute value.
This is a deterministic, set-theoretic inequality, so it holds even when the indexed values $g_1(i), g_2(i)$ are correlated across $i$, which is what allows EMaQ's contraction argument to transport to ACSAC's $NH$ correlated prefix candidates~\citep{EMaQ}.
\end{proof}

\begin{theorem}[Contraction and unique fixed point]
\label{thm:contraction}
The operator $\Bnh$ in Equation~\ref{eq:bnh} satisfies
\begin{equation}
\label{eq:contraction}
\|\Bnh Q_1 - \Bnh Q_2\|_\infty \;\le\; \gamma\, \|Q_1 - Q_2\|_\infty
\end{equation}
for all bounded $Q_1, Q_2 : \mathcal{S} \times \Aleq \to \mathbb{R}$, so $\Bnh$ has a unique fixed point in the space of bounded $Q$-functions.
\end{theorem}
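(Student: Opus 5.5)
We fix an arbitrary input $(s_t, a_{t:t+h})$ with $h \in [H]$ and compare the two backups pointwise. By Definition~\ref{def:bnh}, the reward term $r_t^{(h)}$ and the successor state $s_{t+h}$ depend only on $(s_t, a_{t:t+h})$ and the deterministic dynamics of Assumption~\ref{ass:det_mdp}, not on $Q$. They therefore cancel, leaving
\begin{equation*}
(\Bnh Q_1 - \Bnh Q_2)(s_t, a_{t:t+h}) = \gamma^h\, \mathbb{E}_{\tilde a^{(1:N)}}\!\Bigl[\max_{(n,k)} Q_1(s_{t+h}, \tilde a^{(n)}_{t+h:t+h+k}) - \max_{(n,k)} Q_2(s_{t+h}, \tilde a^{(n)}_{t+h:t+h+k})\Bigr],
\end{equation*}
where the expectation is over the same $N$ i.i.d.\ proposal chunks from $\pi_\theta^H(\cdot\mid s_{t+h})$ for both terms. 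This coupling is the key point: both maxima are taken over the identical realized index set $\mathcal{I} = [N]\times[H]$ of candidate prefixes.

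For each fixed realization of the proposals, we apply Lemma~\ref{lem:max_nonexpansive} with $g_j(n,k) := Q_j(s_{t+h}, \tilde a^{(n)}_{t+h:t+h+k})$. The difference of maxima is then bounded in absolute value by $\max_{(n,k)}|Q_1 - Q_2|$ at those prefixes, and hence by $\|Q_1 - Q_2\|_\infty$. Because every candidate prefix lies in $\Aleq$, the sup-norm over $\mathcal{S}\times\Aleq$ dominates each term. Moving the absolute value inside the expectation by Jensen's inequality (or plain monotonicity) gives a pointwise bound of $\gamma^h \|Q_1 - Q_2\|_\infty$.

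Since $h \ge 1$ and $\gamma \in (0,1)$, we have $\gamma^h \le \gamma$. Taking the supremum over $(s_t, a_{t:t+h})$ yields Equation~\ref{eq:contraction}. We also check that $\Bnh$ maps bounded functions to bounded functions: $|r_t^{(h)}| \le R_{\max}/(1-\gamma)$, and the expected max is bounded by $\|Q\|_\infty$. It remains to check that the space of bounded $Q$-functions on $\mathcal{S}\times\Aleq$ with the sup-norm is complete, which holds because it is a standard $\ell^\infty$/bounded-function space. Banach's fixed point theorem then gives existence and uniqueness of the fixed point.

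The only delicate step is measurability and integrability of the expected max in the continuous case. Each realization is a max over $NH$ terms bounded by $\|Q\|_\infty$, so the integrand is bounded, and under the finite/discretized convention of Assumption~\ref{ass:det_mdp} the expectation is a finite sum. I expect this to be the main obstacle only formally. The substantive idea is the common-randomness coupling, which lets the correlated candidates be handled by the deterministic Lemma~\ref{lem:max_nonexpansive}.
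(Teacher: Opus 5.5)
Your proof is correct and follows the paper's argument step for step: cancel $r_t^{(h)}$, apply Lemma~\ref{lem:max_nonexpansive} to each realized set of $NH$ proposal prefixes (shared between $Q_1$ and $Q_2$), move the absolute value inside the expectation, and use $\gamma^h \le \gamma$. Your checks that $\Bnh$ maps bounded functions to bounded functions and that the bounded-function space is complete, so Banach's theorem applies, are correct and make explicit what the paper leaves implicit.
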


\begin{proof}
Fix $(s_t, a_{t:t+h})$ and let $s' = s_{t+h}$.
The shared $r_t^{(h)}$ cancels, so
$|(\Bnh Q_1)(s_t, a_{t:t+h}) - (\Bnh Q_2)(s_t, a_{t:t+h})| = \gamma^h \bigl| \mathbb{E}[\max_{n,k} Q_1(s', \cdot)] - \mathbb{E}[\max_{n,k} Q_2(s', \cdot)] \bigr|$.
By Jensen's inequality and Lemma~\ref{lem:max_nonexpansive} applied pointwise to each realized proposal sample,
$\bigl| \mathbb{E}[\max Q_1] - \mathbb{E}[\max Q_2] \bigr| \le \mathbb{E}\bigl[\max_{n,k} |Q_1(s', \cdot) - Q_2(s', \cdot)|\bigr] \le \|Q_1 - Q_2\|_\infty$.
Hence $|(\Bnh Q_1) - (\Bnh Q_2)| \le \gamma^h \|Q_1 - Q_2\|_\infty \le \gamma\, \|Q_1 - Q_2\|_\infty$.
A sup-norm contraction has a unique fixed point, which we denote $\Qnh$.
\end{proof}

\begin{definition}[Critic-induced extraction policy]
\label{def:induced_policy}
For any bounded $Q$, the extraction policy $\pi_\star^Q$ acts at state $s_t$ as follows.
Draw $a^{(1)}_{t:t+H}, \ldots, a^{(N)}_{t:t+H} \stackrel{\mathrm{i.i.d.}}{\sim} \pi_\theta^H(\cdot \mid s_t)$, set
\begin{equation}
\label{eq:argmax_pair}
(n^\star, h^\star) \;=\; \arg\max_{n \in [N],\, h \in [H]} Q\!\left(s_t,\, a^{(n)}_{t:t+h}\right),
\end{equation}
and execute the prefix $a^{(n^\star)}_{t:t+h^\star}$, treating it as one temporally extended action.
We write $\pi_\star := \pi_\star^{\Qnh}$ for the idealized adaptive-prefix policy induced by the operator fixed point;
the deployed implementation in Section~\ref{sec:adaptive_extraction} applies the same extraction rule to the learned critic $Q_\phi$.
\end{definition}

\begin{theorem}[Fixed-point identity]
\label{thm:fixed_point_identity}
$\Qnh$ is the action-value function of $\pi_\star$ when each selected prefix is treated as one temporally extended action;
in particular $\Qnh = Q^{\pi_\star}$.
\end{theorem}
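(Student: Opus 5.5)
The plan is to write down the Bellman evaluation equation that $Q^{\pi_\star}$ must satisfy when each selected prefix is a single temporally extended action. We then show this equation coincides with the fixed-point equation $\Qnh = \Bnh \Qnh$, and conclude by uniqueness from Theorem~\ref{thm:contraction}.

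\textbf{Step 1: the semi-MDP.} Define the semi-MDP whose decision points are the replanning states. An action at $s$ is a prefix $a_{s:s+h} \in \Aleq$, its reward is $r^{(h)}$ from Equation~\ref{eq:chunk_return}, and its discount is $\gamma^h$. Under Assumption~\ref{ass:det_mdp} the successor state $s_{t+h}$ is a deterministic function of $(s_t, a_{t:t+h})$. For the stationary stochastic policy $\pi_\star$ of Definition~\ref{def:induced_policy}, the action-value function is the expected discounted return obtained by executing $a_{t:t+h}$ and then following $\pi_\star$ at every subsequent replanning state. Conditioning on the first macro-step and applying the strong Markov property at $s_{t+h}$ gives
\begin{equation*}
Q^{\pi_\star}(s_t, a_{t:t+h}) = r_t^{(h)} + \gamma^h\, \mathbb{E}_{\tilde a^{(1:N)} \sim \pi_\theta^H(\cdot\mid s_{t+h})}\bigl[ Q^{\pi_\star}(s_{t+h}, \tilde a^{(n^\star)}_{t+h:t+h+k^\star}) \bigr],
\end{equation*}
where $(n^\star, k^\star)$ is the argmax of $\Qnh$ over the $NH$ proposal prefixes. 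Thus $Q^{\pi_\star}$ is the unique fixed point of the policy-evaluation operator $\mathcal{T}^{\pi_\star}$ defined by the right-hand side. This operator is a $\gamma$-contraction by the same $\gamma^h \le \gamma$ argument used in Theorem~\ref{thm:contraction}, using only linearity of expectation. Boundedness of $Q^{\pi_\star}$ follows from $|r| \le R_{\max}$, which gives $|Q^{\pi_\star}| \le R_{\max}/(1-\gamma)$.

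\textbf{Step 2: the fixed point of $\Bnh$ solves the evaluation equation.} Apply $\mathcal{T}^{\pi_\star}$ to $\Qnh$. For each realized proposal set, the selected pair $(n^\star, k^\star)$ maximizes $\Qnh(s_{t+h}, \cdot)$ by construction, so
\begin{equation*}
\Qnh(s_{t+h}, \tilde a^{(n^\star)}_{t+h:t+h+k^\star}) = \max_{n,k} \Qnh(s_{t+h}, \tilde a^{(n)}_{t+h:t+h+k}).
\end{equation*}
The tie-breaking rule does not matter here, because every tied candidate attains the same value. Taking expectations gives $\mathcal{T}^{\pi_\star}\Qnh = \Bnh \Qnh = \Qnh$. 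Since $\mathcal{T}^{\pi_\star}$ has a unique bounded fixed point, $\Qnh = Q^{\pi_\star}$.

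\textbf{Main obstacle.} The main difficulty is Step 1: justifying rigorously that the return of $\pi_\star$ satisfies the one-macro-step recursion. The policy $\pi_\star$ is defined through the fixed point $\Qnh$ itself, which looks circular. The circularity is harmless once we freeze $\Qnh$ (which exists by Theorem~\ref{thm:contraction}) and regard $\pi_\star$ as a fixed stationary Markov policy on the semi-MDP. The remaining care is to check two things. First, because all macro-step durations satisfy $h \ge 1$, the return series converges absolutely with discount at most $\gamma$ per decision. Second, the fresh i.i.d.\ proposal draws at each replanning state make the decision rule Markov in the current state. With these in place, the standard semi-MDP policy-evaluation recursion applies verbatim.
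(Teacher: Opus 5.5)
Your proposal is correct and is essentially the paper's proof. In both, the argmax identity turns the expected max in $\Bnh$ into an expectation under $\pi_\star$, so $\Qnh$ solves the variable-horizon evaluation equation for $\pi_\star$, and that solution is unique by the same $\gamma^h \le \gamma$ contraction argument. You also make explicit two points the paper takes for granted: that the return-defined $Q^{\pi_\star}$ actually satisfies this recursion once $\Qnh$ is frozen (which removes the apparent circularity), and that tie-breaking is harmless. This is a useful tightening, not a different route.
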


\begin{proof}
Equation~\ref{eq:argmax_pair} gives, for any draw of the $N$ proposals,
$\Qnh(s_t, a^{(n^\star)}_{t:t+h^\star}) = \max_{n \in [N],\, k \in [H]} \Qnh(s_t, a^{(n)}_{t:t+k})$.
Taking the expectation over the proposal draws, the inner expected max in Equation~\ref{eq:bnh} equals $\mathbb{E}_{\pi_\star}[\Qnh(s_t, \cdot)]$, so the fixed-point equation $\Qnh = \Bnh \Qnh$ reads
\begin{equation*}
\Qnh(s_t, a_{t:t+h}) \;=\; r_t^{(h)} + \gamma^h\, \mathbb{E}_{a' \sim \pi_\star(\cdot \mid s_{t+h})}\bigl[\Qnh(s_{t+h}, a')\bigr].
\end{equation*}
This is the variable-horizon Bellman equation for $\pi_\star$ and uniquely identifies its action-value function by the same contraction argument as Theorem~\ref{thm:contraction}, with the deterministic selection $\pi_\star$ in place of the joint max.
We may therefore write the fixed point as $Q^{\pi_\star}$ in the rest of the appendix.
\end{proof}

\begin{proposition}[Critic targets are Monte Carlo Bellman samples]
\label{prop:loss_correspondence}
Fix any bounded $Q : \mathcal{S} \times \Aleq \to \mathbb{R}$ and define the per-horizon target
\begin{equation}
\label{eq:G_h_sample}
\hat G_h(Q) \;:=\; r_t^{(h)} \;+\; \gamma^h \max_{n \in [N],\, k \in [H]} Q\!\left(s_{t+h},\, \tilde a^{(n)}_{t+h:t+h+k}\right),
\end{equation}
with $\tilde a^{(1)}_{t+h:t+h+H}, \ldots, \tilde a^{(N)}_{t+h:t+h+H} \stackrel{\mathrm{i.i.d.}}{\sim} \pi_\theta^H(\cdot \mid s_{t+h})$.
$\hat G_h(Q)$ is the explicit-max sample form of the body's bootstrap target $G_h$ in Equation~\ref{eq:h_target}, with $\pi_\star$ replaced by the joint $\arg\max$ and the bootstrap critic treated as a free argument; the body's $G_h(s_t, a_{t:t+h})$ corresponds to $\hat G_h(Q_{\bar\phi})$ at this conditioning point.
Under Assumption~\ref{ass:det_mdp},
\begin{equation}
\label{eq:G_h_expectation}
\mathbb{E}\!\left[\hat G_h(Q) \,\middle|\, s_t, a_{t:t+h}\right] \;=\; (\Bnh Q)(s_t, a_{t:t+h}),
\end{equation}
and the conditional squared-error loss decomposes as
\begin{equation}
\label{eq:bias_var}
\begin{aligned}
&\mathbb{E}\!\left[(Q_\phi(s_t, a_{t:t+h}) - \hat G_h(Q))^2 \,\middle|\, s_t, a_{t:t+h}\right] \\
&\quad =\; \bigl(Q_\phi(s_t, a_{t:t+h}) - (\Bnh Q)(s_t, a_{t:t+h})\bigr)^2 + \mathrm{Var}\!\left[\hat G_h(Q) \,\middle|\, s_t, a_{t:t+h}\right].
\end{aligned}
\end{equation}
Hence the empirical critic loss in Equation~\ref{eq:critic_loss} is a Monte Carlo regression toward the Bellman backup $\Bnh Q$, not an exact squared Bellman residual for any single sampled target.
In the tabular exact-expectation regime, fitted iteration with $\Bnh$ converges to $\Qnh$.
\end{proposition}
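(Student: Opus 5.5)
The plan is to prove the three claims of Proposition~\ref{prop:loss_correspondence} in order: the conditional-expectation identity, the bias--variance decomposition, and tabular convergence.

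\textbf{Unbiasedness.} Fix $(s_t, a_{t:t+h})$. Under Assumption~\ref{ass:det_mdp}, the states $s_{t+1}, \ldots, s_{t+h}$ are deterministic functions of $(s_t, a_{t:t+h})$ through $f$. Hence the reward term $r_t^{(h)}$ of Equation~\ref{eq:chunk_return} and the bootstrap state $s_{t+h}$ are measurable with respect to the conditioning, so $r_t^{(h)}$ factors out of the conditional expectation. The only remaining randomness in $\hat G_h(Q)$ is the i.i.d.\ proposal draw $\tilde a^{(1:N)}_{t+h:t+h+H} \sim \pi_\theta^H(\cdot \mid s_{t+h})$. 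This draw depends on the conditioning only through $s_{t+h}$, which is itself fixed.

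Since $Q$ is bounded, the random variable $\max_{n,k} Q(s_{t+h}, \tilde a^{(n)}_{t+h:t+h+k})$ is bounded by $\|Q\|_\infty$ and therefore integrable. Its conditional expectation is exactly the inner expectation in Equation~\ref{eq:bnh}. Multiplying by $\gamma^h$ and adding $r_t^{(h)}$ gives Equation~\ref{eq:G_h_expectation}. The one point to state explicitly is measurability of the max over the finite index set $[N]\times[H]$, which is immediate because it is a max of finitely many measurable functions.

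\textbf{Decomposition.} Write $q := Q_\phi(s_t, a_{t:t+h})$, which is deterministic given the conditioning, and $b := (\Bnh Q)(s_t, a_{t:t+h})$. Expand
\[
(q - \hat G_h)^2 = (q - b)^2 + 2(q-b)(b - \hat G_h) + (b - \hat G_h)^2 .
\]
The cross term has zero conditional mean by the unbiasedness step. The last term has conditional mean equal to $\mathrm{Var}[\hat G_h(Q) \mid s_t, a_{t:t+h}]$, and this variance is finite by boundedness. Together these give Equation~\ref{eq:bias_var}. Because the variance term does not depend on $\phi$, minimizing the averaged loss of Equation~\ref{eq:critic_loss} over $Q_\phi$ is equivalent in population to regressing toward $\Bnh Q$ at every horizon $h$. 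This includes the $1/H$ averaging, since each summand is minimized separately at its own prefix.

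\textbf{Convergence.} In the tabular exact-expectation regime the population minimizer at every visited $(s_t, a_{t:t+h})$ is $\Bnh Q$. The fitted iteration is therefore $Q_{k+1} = \Bnh Q_k$. Theorem~\ref{thm:contraction} gives $\|Q_k - \Qnh\|_\infty \le \gamma^k \|Q_0 - \Qnh\|_\infty \to 0$ by Banach's fixed-point theorem.

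\textbf{Main obstacle.} The delicate step is this last one. It needs coverage, meaning every prefix on which the sup-norm is taken must receive regression targets. Otherwise the contraction holds only on the data support. I would handle this by stating the tabular regime as exact minimization at all $(s, a_{t:t+h}) \in \mathcal{S}\times\Aleq$, or by restricting the sup-norm to the support of the data and of the proposal prefixes. The remaining steps are routine.
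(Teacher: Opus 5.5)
Your proposal is correct and follows essentially the same route as the paper: under Assumption~\ref{ass:det_mdp} the reward sum $r_t^{(h)}$ and the bootstrap state $s_{t+h}$ are fixed by the conditioning, so the only randomness is the proposal draw, which yields Definition~\ref{def:bnh}, and Equation~\ref{eq:bias_var} is the standard bias--variance identity. The paper leaves the tabular convergence claim implicit in Theorem~\ref{thm:contraction}, so your Banach iteration and coverage caveat make explicit an assumption the paper does not state: exact regression must hold on all of $\mathcal{S}\times\Aleq$, or at least on a set closed under the proposal prefixes queried at each $s_{t+h}$.
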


\begin{proof}
Equation~\ref{eq:G_h_expectation} follows from Definition~\ref{def:bnh}, since under Assumption~\ref{ass:det_mdp} the rewards $\{r_{t+\tau}\}_{\tau<h}$ summed in $r_t^{(h)}$ are deterministic given $(s_t, a_{t:t+h})$, and the only randomness in $\hat G_h(Q)$ comes from the proposal sampling that defines $\Bnh Q$.
The decomposition in Equation~\ref{eq:bias_var} is the standard bias-variance identity for the squared error of a deterministic prediction $Q_\phi(s_t, a_{t:t+h})$ against a stochastic target $\hat G_h(Q)$ with mean $(\Bnh Q)(s_t, a_{t:t+h})$.
\end{proof}

\begin{remark}[Bounded fixed-point values]
\label{rem:bounded_values}
With rewards in $[-R_{\max}, R_{\max}]$, the contraction in Theorem~\ref{thm:contraction} maps the interval $[-R_{\max}/(1-\gamma),\, R_{\max}/(1-\gamma)]$ to itself pointwise, so $\Qnh$ is bounded by the reward scale uniformly in $(s_t, a_{t:t+h})$.
This is an operator-level statement;
the iterates of a learned neural critic may transiently exceed this interval without explicit clipping.
\end{remark}

\subsection{Per-Horizon Loss Averaging Stabilizes Updates}
\label{app:theory-multi-horizon}

The multi-step targets $G_h$ in Equation~\ref{eq:critic_loss} accumulate rewards over $h$ steps, and the variance of this cumulative sum tends to grow with $h$.
ACSAC averages all $H$ per-horizon squared losses at the gradient level, in line with T-SAC's gradient-level averaging~\citep{T-SAC} and TOP-ERL's multi-horizon Transformer supervision~\citep{TOP-ERL}.
A standard variance-of-mean argument explains this design choice.

\paragraph{Cumulative-reward variance.}
For the cumulative-reward part of $G_h$, expansion gives
\begin{equation}
\label{eq:reward_sum_var}
\mathrm{Var}\!\left[r_t^{(h)}\right] \;=\; \sum_{i=0}^{h-1}\sum_{j=0}^{h-1} \gamma^{i+j}\, \mathrm{Cov}(r_{t+i},\, r_{t+j}),
\end{equation}
which typically increases with $h$, especially when $\gamma$ is close to $1$ and per-step rewards are positively correlated.

\paragraph{Per-horizon gradient.}
Let $\delta_h(\phi) := Q_\phi(s_t, a_{t:t+h}) - G_h$ denote the per-horizon Bellman residual, and let $g_h := \delta_h(\phi)\, \nabla_\phi Q_\phi(s_t, a_{t:t+h})$ denote its contribution to $\nabla_\phi \mathcal{L}(\phi)$.
The averaged loss $\frac{1}{H}\sum_h \delta_h^2$ in Equation~\ref{eq:critic_loss} produces the gradient $\bar g := \frac{1}{H}\sum_{h=1}^{H} g_h$, and let $\tilde g_h := g_h - \mathbb{E}[g_h]$ denote its centered version.

\begin{lemma}[Variance reduction by averaging]
\label{lem:var_reduction}
Suppose the centered per-horizon gradients satisfy $\mathbb{E}[\|\tilde g_h\|^2] \le \sigma^2$ for every $h$ and $\mathbb{E}[\langle \tilde g_h, \tilde g_{h'}\rangle] \le \rho\,\sigma^2$ for some $\rho \in [-1/(H-1), 1]$ and every $h \ne h'$.
Then the centered average $\bar{\tilde g} := \frac{1}{H}\sum_h \tilde g_h$ satisfies
\begin{equation}
\label{eq:var_reduction}
\mathrm{Var}(\bar g) \;:=\; \mathbb{E}\!\left[\big\|\bar{\tilde g}\big\|^2\right] \;\le\; \sigma^2\!\left[\rho + \frac{1-\rho}{H}\right].
\end{equation}
Whenever $\rho < 1$, this is strictly less than $\sigma^2$, the variance bound for any single $g_h$.
\end{lemma}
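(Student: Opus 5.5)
The plan is to expand the squared norm of the centered average directly and bound the diagonal and off-diagonal terms separately using the two hypotheses. By linearity of expectation,
\begin{equation*}
\mathbb{E}\!\left[\big\|\bar{\tilde g}\big\|^2\right] \;=\; \frac{1}{H^2}\sum_{h=1}^{H}\sum_{h'=1}^{H} \mathbb{E}\bigl[\langle \tilde g_h, \tilde g_{h'}\rangle\bigr].
\end{equation*}
I will split this double sum into its $H$ diagonal terms ($h = h'$) and its $H(H-1)$ off-diagonal terms ($h \ne h'$).

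For the diagonal terms, each $\mathbb{E}[\langle \tilde g_h, \tilde g_h\rangle] = \mathbb{E}[\|\tilde g_h\|^2]$, which the first hypothesis bounds by $\sigma^2$. For the off-diagonal terms, the second hypothesis bounds each one by $\rho\sigma^2$. Combining the two,
\begin{equation*}
\mathbb{E}\!\left[\big\|\bar{\tilde g}\big\|^2\right] \;\le\; \frac{1}{H^2}\Bigl(H\sigma^2 + H(H-1)\rho\sigma^2\Bigr) \;=\; \sigma^2\Bigl[\frac{1}{H} + \frac{(H-1)\rho}{H}\Bigr] \;=\; \sigma^2\Bigl[\rho + \frac{1-\rho}{H}\Bigr].
\end{equation*}
This is exactly Equation~\ref{eq:var_reduction}. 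The identification of $\mathrm{Var}(\bar g)$ with $\mathbb{E}[\|\bar{\tilde g}\|^2]$ holds because $\bar g - \mathbb{E}[\bar g] = \bar{\tilde g}$, again by linearity.

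For the strictness claim, write $\rho + (1-\rho)/H - 1 = (1-\rho)(1/H - 1)$. This is strictly negative when $\rho < 1$ and $H \ge 2$, so the bound falls strictly below $\sigma^2$. For $H = 1$ there are no off-diagonal terms, the bound equals $\sigma^2$ for every $\rho$, and the strict inequality holds only for $H \ge 2$. I will state this caveat explicitly. The lower limit $\rho \ge -1/(H-1)$ plays no role in the inequality; it only guarantees that the right-hand side is nonnegative, so the hypotheses are consistent.

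There is no real obstacle here. The only point needing care is the bookkeeping of the off-diagonal count, $H(H-1)$, and the fact that the second hypothesis is an upper bound on inner products that may be negative. Because it is used only as an upper bound, no absolute values are needed.
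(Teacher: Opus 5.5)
Your proposal is correct and follows essentially the same argument as the paper: expand $\|\bar{\tilde g}\|^2$ into $H^{-2}\sum_{h,h'}\langle \tilde g_h, \tilde g_{h'}\rangle$, bound the $H$ diagonal terms by $\sigma^2$ and the $H(H-1)$ off-diagonal terms by $\rho\sigma^2$. Your added observation that strictness requires $H \ge 2$ is a valid refinement that the paper's statement glosses over.
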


\begin{proof}
Expand $\|\bar{\tilde g}\|^2 = H^{-2}\sum_{h, h'} \langle \tilde g_h, \tilde g_{h'}\rangle$ and take expectations.
The diagonal terms contribute $H^{-2} \cdot H \sigma^2 = \sigma^2/H$, and the off-diagonal terms contribute at most $H^{-2} \cdot H(H-1)\rho\sigma^2$, which sum to the right-hand side of Equation~\ref{eq:var_reduction}.
\end{proof}

\begin{remark}[Imperfect correlation in ACSAC]
\label{rem:correlation}
This is the setting targeted by T-SAC's gradient-level averaging argument~\citep{T-SAC}.
The per-horizon gradients $g_h$ in ACSAC share the same critic parameters $\phi$, so adjacent horizons are positively correlated;
they are not identical because the residuals $\delta_h$ use different cumulative-reward sums (longer horizons include the rewards of shorter ones plus extra discounted terms) and different bootstrap states $s_{t+h}$.
Whenever $\rho < 1$ in this regime, Lemma~\ref{lem:var_reduction} gives strict variance reduction relative to single-horizon TD training, while target-level averaging would collapse the per-horizon signals.
\end{remark}

\begin{remark}[Sparse reward signals are preserved]
\label{rem:sparse_preserved}
ACSAC averages per-horizon \emph{losses}, not the targets themselves.
A non-zero reward $r_{t+\tau}$ enters every $G_h$ with $h > \tau$ as a discounted summand, so the sparse signal propagates into all $H - \tau$ residuals simultaneously.
This contrasts with target-level averaging, which would replace the per-horizon targets by a single averaged scalar and could dilute sparse rewards~\citep{T-SAC}.
\end{remark}

\paragraph{Beyond the deterministic and exact-expectation case.}
The four results above are statements about action-level mismatch, prefix Q-value semantics, and an operator-level Bellman backup in a deterministic MDP.
Three further directions are listed here as informal pointers.
First, $H = 1$ recovers EMaQ over single actions~\citep{EMaQ}, and the suboptimality of $\Qnh$ relative to the unrestricted variable-horizon optimum is governed by the standard EMaQ-style proposal coverage analysis.
Second, with finite critic and proposal approximation, the prefix-selection regret degrades by margin terms that scale linearly with the critic and proposal errors.
Third, in stochastic MDPs the open-loop chunk return is no longer deterministic, and the resulting nominal-versus-actual gap is the open-loop consistency issue studied by DQC~\citep{DQC};
we therefore state the main proofs in the deterministic setting and treat the stochastic extension as a separate issue.
None of these is required for the four results stated above.

\newpage

\end{document}